\newcommand{\vol}[1]{\ensuremath{V^{#1}}}
\newcommand{\alloc}[2]{\ensuremath{v_{#1}^{#2}}}
\newcommand{\alloci}[2]{\ensuremath{u_{#1}^{#2}}}
\newcommand{\cons}[2]{\ensuremath{r_{#1}^{#2}}}
\newcommand{\limit}[2]{\ensuremath{s_{#1}^{#2}}}
\newcommand{\allocvec}[1]{\ensuremath{{\vec{v}}^{#1}}}
\newcommand{\R}{\ensuremath{\mathbb{R}}}
\newcommand{\play}[3]{\ensuremath{x_{#1,#2}^{#3}}}
\newcommand{\comp}{\ensuremath{u}}
\newcommand{\KL}{\ensuremath{\mbox{KL}}}
\newcommand{\half}{\ensuremath{\frac{1}{2}}}
\newcommand{\eps}{\ensuremath{\epsilon}}
\newcommand{\E}{\ensuremath{\mathbb{E}}}
\renewcommand{\P}{\ensuremath{\mathbb{P}}}
\newcommand{\ind}{\ensuremath{\mathbb{I}}}
\newcommand{\fl}[2]{\ensuremath{f_{#1}^{#2}}}
\newcommand{\diff}[2]{\ensuremath{d_{#1}^{#2}}}
\newcommand{\diffb}[2]{\ensuremath{{\bar{d}}_{#1,#2}}}
\newcommand{\intvol}[2]{\ensuremath{\tilde{V}_{#1,#2}}}
\newcommand{\gtil}{\ensuremath{{\tilde{g}}}}
\newcommand{\rew}{\ensuremath{\mathbf{r}}}
\newcommand{\unif}{\ensuremath{{\mbox{unif}}}}
\newcommand{\gh}{\ensuremath{{\hat{g}}}}
\newcommand{\var}{\ensuremath{\mbox{Var}}}
\newcommand{\expgrad}{\textsc{ExpGrad}}
\newcommand{\expthree}{\textsc{Exp3}}
\newcommand{\optkm}{\textsc{OptKM}}
\newcommand{\parml}{\textsc{ParML}}
\newtheorem{theorem}{Theorem}
\newtheorem{lemma}{Lemma}
\newtheorem{corollary}{Corollary}
\begin{document}

\title{Optimal Allocation Strategies for the Dark Pool Problem}
\author{Alekh Agarwal \\University of California,
  Berkeley\\\texttt{alekh@cs.berkeley.edu}\and Peter Bartlett \\University of
  California, Berkeley\\\texttt{bartlett@cs.berkeley.edu} \and Max Dama\\University
  of California, Berkeley\\\texttt{maxdama@berkeley.edu}}


\maketitle

\begin{abstract}
  We study the problem of allocating stocks to \emph{dark pools}. We
  propose and analyze an optimal approach for allocations, if
  continuous-valued allocations are allowed. We also propose a
  modification for the case when only integer-valued allocations are
  possible. We extend the previous work on this
  problem~\citep{gknv2009darkpools} to adversarial scenarios, while
  also improving on their results in the iid setup. The resulting
  algorithms are efficient, and perform well in simulations
  under stochastic and adversarial inputs.
\end{abstract}

\section{Introduction}

In this paper we consider the problem of allocating stocks to \emph{dark
pools}. As described by \citep{gknv2009darkpools}, dark pools are
a recent type of stock exchange that are designed to facilitate large
transactions. A key aspect of dark pools is the \emph{censored
  feedback} that the trader receives. At every round the trader has a
certain number $\vol{t}$ of shares to allocate amongst $K$ different
dark pools. The dark pool $i$ trades as many of the allocated shares
$v_i$ as it can with the available liquidity. The trader only finds
out how many of these allocated
shares were successfully traded at each dark pool, but not how many
would have been traded if more were allocated.

It is natural to assume that the actions of the trader affect the
volume available at all dark pools
at later times.  Similarly, it seems natural that at a given time, the
liquidities available at different venues should be correlated:
we would expect counterparties to distribute large trades across many
dark pools, simultaneously affecting their liquidity.
Furthermore, in a realistic scenario, these variables are
governed not only by the trader's actions, but also by the
actions of other
competing traders, each trying to maximize profits. Since the
gain of one trader is at the expense of another, this
problem naturally lends itself to an adversarial analysis. Generalizing
the setup of \citep{gknv2009darkpools}, we assume that
the sequences of volumes and
available liquidity at each venue are chosen
by an adversary who knows the
previous allocations of our algorithm. 

We propose an exponentiated gradient (henceforth EG) style
algorithm that has an optimal regret guarantee against the best
allocation strategy in hindsight. Our algorithm uses a parametrization that
allows it to handle the problem of changing constraint sets
easily. Through a standard online to batch conversion, 
this also yields a significantly better algorithm in the iid setup
studied in \citep{gknv2009darkpools}. However, the
EG algorithm has the drawback that it recommends
continuous-valued allocations. 
We describe how the problem of allocating
an integral number of shares closely resembles a multi-armed
bandit problem. 
As a result, we use ideas from the Exp3 algorithm for adversarial
bandit problems~\citep{auer2003exp3} to design an
algorithm that produces
integer-valued allocations and enjoys a regret of order
$T^{2/3}$ with high probability. While this regret bound holds in an
adversarial setting, it also implies an improvement on
\citep{gknv2009darkpools} in an iid setting.
We also study an
efficient implementation of our algorithm using 
the idea of greedy approximations in Hilbert spaces
\citep{jones92greedy}, \citep{barron93approx}.

In the next section we will describe the problem setup in more detail
and survey previous work. We will describe the EG
algorithm for continuous allocations and prove its regret bound and
optimality in Section~\ref{sec:cont}. In Section~\ref{sec:int} we
describe the algorithm for integer valued
allocations. Section~\ref{sec:greedy} describes an efficient
implementation. Finally we
present experiments comparing our algorithms with that
of \citep{gknv2009darkpools} using the data simulator described in 
their paper.

\section{Setup and Related Work}
We  generalize the setup of \citep{gknv2009darkpools}. A
learning algorithm receives a sequence of volumes
$\vol{1},\dots\,\vol{T}$ where $\vol{t} \in \{1,\dots,\vol{}\}$. It has $K$
available venues, amongst which it 
can allocate up to $\vol{t}$ units at time $t$. The learner chooses an
allocation $\alloc{i}{t}$ for the $i_{th}$ venue at time $t$ that
satisfies $\sum_{i=1}^K v_i^t \le V^t.$

Each
venue has a maximum consumption level $\limit{i}{t}$. The
learner then receives the number of units $\cons{i}{t} =
\min(\alloc{i}{t}, \limit{i}{t})$ consumed at venue $i$. We allow the
sequence of volumes 
and maximum consumption levels to be chosen adversarially, i.e.
$V_t, \limit{i}{t}$ can depend on
$\{\alloc{i}{1},\dots,\alloc{i}{t-1}\}_{i=1}^K$. We measure the
performance of our learner in terms of its regret

\begin{small}
  $$
    R_T = \max \sum_{t=1}^T\sum_{i=1}^K\min(u_i^t,\limit{i}{t}) -
      \min(\alloc{i}{t},\limit{i}{t})
  $$
\end{small}

where the outer maximization is over the vector
$\mbox{opt}\in\{1,\ldots,K\}^V$ and
  \[
    u_i^t = \sum_{v=1}^{V^t} \ind(\mbox{opt}_v = i),
  \]

i.e., we compete against any strategy that chooses
a fixed sequence of venues $\mbox{opt}_1,\ldots, \mbox{opt}_V$
and always allocates the $v$th unit to
venue $\mbox{opt}_v$.

The work most closely related to ours is \citep{gknv2009darkpools}. In
that paper, the authors consider the 
sequence of volumes $\vol{1},\dots,\vol{T}$ and allocation limits
$\limit{i}{t}$ to be distributed in an 
iid fashion. They propose an
algorithm based on Kaplan-Meier estimators.
Their algorithm 
mimics an optimal allocation strategy by estimating the tail
probabilities of $\limit{i}{t}$ being larger than a given value. They
show that the allocations of their algorithm are $\epsilon$-suboptimal
with probability at most $1-\epsilon$ after seeing sufficiently many
samples. 
Theorem~1 in~\citep{gknv2009darkpools} shows that, if the $s_i^t$ is
chosen iid, then the optimal strategy always allocates the $i$th unit
to a fixed venue. This justifies our definition of
regret in comparison to this class of strategies.

The ideas used in our paper draw on the rich literature on
online adversarial learning. The algorithm of Section~\ref{sec:cont}
is based on the classical EG algorithm
~\citep{lw94wm}. When playing integral allocations, we describe how the
multi-armed bandits problem is a special case of our problem for
$V=1$. For the general case, we describe an adaptation of the Exp3
algorithm~\citep{auer2003exp3} for adversarial multi-armed bandits.
To provide regret bounds that hold with high probability, we use a
variance correction similar to
the Exp3.P algorithm~\citep{auer2003exp3}. Our lower bounds use
information theoretic techniques, building on Fano's
method~\citep{yu93assouad}. The efficient implementation of our
algorithm relies on greedy approximation techniques in Hilbert space
\citep{jones92greedy}, \citep{barron93approx}.

\addtolength{\abovedisplayskip}{0.3cm}
\addtolength{\belowdisplayskip}{0.3cm}

\section{Optimal algorithm for fractional allocations}
\label{sec:cont}
Although the dark pool problem requires us to allocate an integral
number of shares at every venue, we start by studying the
simpler case where we can allocate any positive value for every venue,
so long as they satisfy $\sum_{i=1}^K\alloc{i}{t} \leq \vol{t}$. We
start by noting that the reward function
$\cons{i}{t}=\min(\alloc{i}{t},\limit{i}{t})$ is concave in 
allocations $\alloc{i}{t}$. 

Maximization of concave functions is well understood, even in an
adversarial scenario through approaches such as online gradient
ascent. We note that in this problem, the algorithm has access to the
subgradient of the reward function. To see this, we define

\begin{equation}
  g_i^t = \left\{\begin{array}{cc}1&\mbox{if }\cons{i}{t} =
  \alloc{i}{t}\\0&\mbox{if }\cons{i}{t} <
  \alloc{i}{t}\end{array}\right. 
  \label{eqn:gradient}
\end{equation}

Then it is easy to check that $g_i^t$ can be constructed from the
feedback we receive, and it lies in the subgradient set
$\frac{\partial \cons{i}{t}}{\partial \alloc{i}{t}}$. Hence, we can run a
standard online (sub)gradient ascent algorithm on 
this sequence of reward functions. However, the allocations
$\alloc{i}{t}$ are chosen from a different set $S_t = 
\{\allocvec{t}~:~ \sum_{i=1}^K\alloc{i}{t} \leq \vol{t}\}$ at every
round. Using standard online gradient ascent analysis, we can
demonstrate a low regret only against a comparator that lies in the
intersection of all these constraint sets
$\cap_{t=1}^TS_t$. However the regret guarantee can be rather
meaningless if $\vol{t}$ is extremely small at even a single
round. Ideally, we would like to compete with an optimal allocation
strategy like \citep{gknv2009darkpools}. A slightly
different parameterization allows us to do exactly that.

Let us define $\Delta_K^V = \{x^1,\dots,x^V~:~\sum_{i=1}^Kx_i^v =
1~\forall v \leq V\}$ to be the Cartesian product of $V$ simplices,
each in $\R^K$. Then we can construct an algorithm for allocations as
follows: for each unit $v = \{1,\dots,V\}$, we have a distribution
over the venues $\{1,\dots,K\}$ where that unit is allocated. At time $t$, the
algorithm plays $\alloc{i}{t} = \sum_{v=1}^{\vol{t}}
\play{t}{i}{v}$. It is clear that this allocation satisfies the volume
constraint. 

The comparator is now defined as a fixed point $\comp \in
\Delta_K^V$. We compete with the strategy that plays according to
$\alloc{i}{t} = \sum_{v=1}^{\vol{t}} \comp_i^v$. Then the best
comparator $\comp$ is equivalent to the best fixed allocation strategy
$\mbox{opt}\in\{1,\ldots,K\}^V$.
It is also clear that if we can compete
with the best strategy in an adversarial setup, online to
batch conversion techniques (see Cesa-Bianchi et
al~\citep{cbcg2001generalization}) will give a small expected error in
the case where the volumes and maximum consumptions are drawn in an
iid fashion. 

\subsection{Algorithm and upper bound}

An online gradient ascent algorithm for this setup is presented in
Algorithm~\ref{alg:fracalloc}. 

\begin{algorithm}[htb]
\begin{algorithmic}
  \STATE \textbf{Input} learning rate $\eta$, bound on volumes $V$.
  \STATE Initialize $\play{1}{i}{v} = \frac{1}{K}$ for $v \in
  \{1,\dots,V\}$, $i\in\{1,\ldots,K\}$.
  \FOR{$t=1,\dots,T$}
  \STATE Set $\alloc{i}{t} = \sum_{v=1}^{\vol{t}} \play{t}{i}{v}$. 
  \STATE Receive $\cons{i}{t} = \min\{\alloc{i}{t},\limit{i}{t}\}$. 
  \STATE Set $g_i^t$ as defined in Equation~(\ref{eqn:gradient}).
  \STATE Set $g_{t,i}^v = g_i^t$ if $v \leq \vol{t}$, 0 otherwise.
  \STATE Update $\play{t+1}{i}{v} \propto \play{t}{i}{v}\exp(\eta
  g_{t,i}^v)$. 
  \ENDFOR
\end{algorithmic}
\caption{Exponentiated gradient algorithm for continuous-valued allocations
  to dark pools}
\label{alg:fracalloc}
\end{algorithm}

It can be shown that the algorithm enjoys the following regret
guaranteee.
\begin{theorem}
  For any choices of the volumes $\vol{t}\in[0,V]$
  and of the maximum consumption levels $\limit{i}{t}$,
  the regret of Algorithm~\ref{alg:fracalloc} with $\eta =
  \sqrt{\frac{\ln K}{(e-2)T}}$ over $T$ rounds is
  $O(V\sqrt{T\ln K})$.
  \label{thm:regretfrac}
\end{theorem}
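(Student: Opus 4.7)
The plan is to reduce the regret of Algorithm~\ref{alg:fracalloc} against the best fixed $\comp\in\Delta_K^V$ to the sum of $V$ independent Hedge/EG regrets on the simplex $\Delta_K$, and then apply the standard exponentiated gradient bound on each of the $V$ slots. The key technical observation is that although the reward $\cons{i}{t}=\min(\alloc{i}{t},\limit{i}{t})$ is a nonlinear (concave) function of the played allocation, the subgradient defined in Equation~(\ref{eqn:gradient}) gives a linear upper bound that decomposes cleanly across the $V$ coordinates of $\comp$.

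First, I would argue that concavity of $\min(\cdot,\limit{i}{t})$ in its first argument, together with the fact that $g_i^t$ lies in $\partial \cons{i}{t}/\partial \alloc{i}{t}$, yields
\begin{equation*}
\sum_{i=1}^K \min(\comp_i^t,\limit{i}{t}) - \sum_{i=1}^K \min(\alloc{i}{t},\limit{i}{t}) \;\le\; \sum_{i=1}^K g_i^t\bigl(\comp_i^t - \alloc{i}{t}\bigr),
\end{equation*}
where $\comp_i^t = \sum_{v=1}^{\vol{t}} \comp_i^v$. Using the parametrization $\alloc{i}{t}=\sum_{v=1}^{\vol{t}}\play{t}{i}{v}$ and the convention that $g_{t,i}^v$ is $g_i^t$ for $v\le\vol{t}$ and $0$ otherwise, the right-hand side splits as $\sum_{v=1}^{V}\sum_{i=1}^K g_{t,i}^v(\comp_i^v - \play{t}{i}{v})$. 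Summing over $t$ bounds the regret by $V$ independent simplex regrets, one for each unit index $v\in\{1,\ldots,V\}$.

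Next, I observe that within each slot $v$, the update rule $\play{t+1}{i}{v}\propto\play{t}{i}{v}\exp(\eta g_{t,i}^v)$ is exactly Hedge on $K$ experts with reward vectors $g_{t,\cdot}^v\in[0,1]^K$; rounds with $v>\vol{t}$ contribute zero reward and no update, so they do not affect the analysis. The standard EG/Hedge regret bound (e.g., the one giving rise to the $(e-2)$ constant in the tuning $\eta = \sqrt{\ln K/((e-2)T)}$) yields
\begin{equation*}
\sum_{t=1}^{T}\sum_{i=1}^K g_{t,i}^v\bigl(\comp_i^v - \play{t}{i}{v}\bigr) \;\le\; 2\sqrt{(e-2)\,T\ln K}
\end{equation*}
for every $\comp^v\in\Delta_K$. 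Summing this bound over the $V$ slots gives the claimed $O(V\sqrt{T\ln K})$ regret against any $\comp\in\Delta_K^V$, and in particular against the unit-vector comparator corresponding to any $\mbox{opt}\in\{1,\ldots,K\}^V$.

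The only subtle point, which I expect to be the main obstacle to writing cleanly, is the changing constraint set: different rounds allocate a different number of units, so the slot decomposition must correctly account for the fact that the comparator mass $\comp_i^v$ for $v>\vol{t}$ is also ignored on the left-hand side of the linear bound above (since those units simply are not present in the instantaneous reward). The slot-wise construction of $g_{t,i}^v$ exactly matches this, so the per-slot Hedge analysis applies uniformly without needing to project onto a time-varying feasible set, which is precisely the benefit of the $\Delta_K^V$ parametrization emphasized before the theorem.
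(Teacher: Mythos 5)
Your proposal is correct and follows essentially the same route as the paper: linearize via the supergradient inequality for the concave reward, decompose the resulting linear regret across the $V$ unit slots (with $g_{t,i}^v=0$ for $v>\vol{t}$ handling the time-varying volume), and apply the exponentiated-gradient/Hedge bound with the $(e-2)$ second-order term to each slot. The paper simply carries out the per-slot Hedge analysis explicitly (following Theorem 11.3 of Cesa-Bianchi and Lugosi, with the KL telescoping and the bound $\exp(\nu)\le 1+\nu+(e-2)\nu^2$) rather than invoking it as a black box, so the two arguments are the same in substance.
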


\begin{proof}
  The regret is defined as
  \begin{small}
  \begin{align*}
    R_T &= \max_{u \in \Delta_K^V}
    \sum_{t=1}^T\sum_{i=1}^K\min\left(\sum_{v=1}^{\vol{t}}\comp_i^v,
    \limit{i}{t}\right) -
    \sum_{t=1}^T\sum_{i=1}^K\min\left(\alloc{i}{t},\limit{i}{t}\right)\\ 
    &\leq \sum_{t=1}^T\sum_{v=1}^{\vol{t}}\left(\comp^v
    - x_t^v\right)^\top g_t^v.
  \end{align*}
\end{small}
Following the proof of Theorem 11.3 from Cesa-Bianchi et
  al~\citep{cbl2006plg}, we define $\nu_i^v = \eta g_{t,i}^v - \eta
  (g_t^v)^\top x_t^v.$ 
  Also, we note that the gradient is zero for $v > \vol{t}$. So we can
  sum over $v$ from $1$ to $V$ rather than $\vol{t}$. Then we bound
  the regret as
  \begin{align*}
    &\sum_{t=1}^T\sum_{v=1}^V\left[(u^v - x_t^v)^\top g_t^v
      -\frac{1}{\eta}\ln\left(\sum_{i=1}^K\play{t}{i}{v}\exp(\nu_i^v)\right)\right.\\ &\left. +
      \frac{1}{\eta}\ln\left(\sum_{i=1}^K\play{t}{i}{v}\exp(\nu_i^v)\right)
      \right]. 
  \end{align*}
Some rewriting and simplification gives the bound
  \begin{small}
  \begin{align*}
    &\frac{1}{\eta}\sum_{t=1}^T\sum_{v=1}^V\left[
      \sum_{i=1}^Ku_i^v\ln\left( \frac{\exp\left(\eta
        g_{t,i}^v\right)}{\sum_{i=1}^K\exp\left(\eta g_{t,i}^v\right)} \right)
      +
      \ln\left(\sum_{i=1}^K \play{t}{i}{v}e^{\nu_i^v}\right)\right] \\
    &= \frac{1}{\eta}\sum_{t=1}^T\sum_{v=1}^V\left[ u_i^v\ln\left(
      \frac{\play{t+1}{i}{v}}{\play{t}{i}{v}}\right) +
      \ln\left(\sum_{i=1}^K \play{t}{i}{v}\exp(\nu_i^v)\right)\right]\\
    &\leq \frac{1}{\eta}\sum_{v=1}^V\left[\KL(u^v||x_1^v) + \sum_{t=1}^T\ln\left(\sum_{i=1}^K
      \play{t}{i}{v}\exp(\nu_i^v)\right)\right].
  \end{align*}
\end{small}
Here, the last line uses the definition of KL-divergence and
  the fact that the telescoping terms cancel out. Now $g_{t,i}^v \leq 1$
  so that $\nu_i^v \leq \eta$. If $\eta \leq 1$, then it is easy to
  verify that $\exp(\nu_i^v) \leq 1 + \nu_i^v +
  (e-2)\left(\nu_i^v\right)^2.$
  We also note that  $\sum_{i=1}^K \play{t}{i}{v}\nu_i^v = 0.$ 

  Also, each of the KL divergence terms in the above display is equal
  to $\ln K$. This is because the optimal comparator will have a 1
  for exactly one venue for each unit $v$. As we choose $x_1^v$ to be
  uniform over all venues, we get the KL divergence between a vertex
  of the $K$-simplex and the uniform distribution which, is $\ln K$.

  Hence we bound the regret as
  \begin{align*}
    &\frac{1}{\eta} V\ln K +
    \frac{1}{\eta}\sum_{t=1}^T\sum_{v=1}^V\ln\left(\sum_{i=1}^K
    \play{t}{i}{v}\left(1 + \nu_i^v + (e-2)\left(\nu_i^v\right)^2\right) \right)\\
    &\leq \frac{1}{\eta} V\ln K +
    \frac{1}{\eta}\sum_{t=1}^T\sum_{v=1}^V(e-2)\eta^2\\
    &= \frac{1}{\eta}V\ln K + (e-2)\eta VT\\
    &\leq 3V\sqrt{T\ln K},
  \end{align*}
  where the last step follows from setting $\eta = \sqrt{\frac{\ln
      K}{(e-2)T}}$. 
\end{proof}

\subsection{Lower bound and minimax optimality}
\label{sec:lbcont}

We will now show that the online exponentiated gradient ascent algorithm in
Algorithm~\ref{alg:fracalloc} has the best regret guarantee
possible. We start by noting that a a regret bound of $O(\sqrt{T\ln
  K})$ is known to be optimal for the experts prediction
problem~\citep{hkw98optimal, aabr-svormd-09}. Hence we can show the
optimality of our algorithm for $V=1$ by reducing experts prediction
problem to the dark pools problem. Recall that in the experts
prediction problem, the algorithm picks an expert from $1,\dots,K$
according to a probability distribution $p_t$ at round $t$. Then it
receives a vector of rewards $\rho_t$ with $\rho_{t,i} \in
[0,1],~~i=1,\dots,K$. In order to describe a reduction, we need to
map the allocations of an algorithm for the dark pools problem to the
probabilities for experts, and map the rewards of experts to the
liquidities at each venue. 

We consider a special setting where $V_t = 1$ at all times. Since
$V_t=1$, the allocations of any dark pools algorithm are 
probabilities-- they are non-negative and add to 1. Hence we set
$p_{t,i} = \alloc{i}{t}$. We also set the liquidity $\limit{i}{t} =
\rho_{t,i}p_{t,i}$. Then the net reward of a dark pools algorithm at
round $t$ is: 
$$\sum_{i=1}^K\min(\limit{i}{t},\alloc{i}{t}) =
\sum_{i=1}^K\min(\rho_{t,i}p_{t,i},p_{t,i}) =
\sum_{i=1}^K\rho_{t,i}p_{t,i},$$ where the last line follows from the
observation that $0 \leq \rho_{t,i} \leq 1$. Hence the net reward of the
dark pools problem is same as that expected reward in the experts
prediction problem. Using the known lower bounds on the optimal regret
in experts prediction problems, we get:
\begin{align*}
&\max_{u \in \Delta_K} \sum_{t=1}^T
\sum_{i=1}^K\left[\min\left(\comp_{i},\limit{i}{t}\right) -
\min(\alloc{i}{t},\limit{i}{t})\right] 
\\
&= \max_i\sum_{t=1}^T\left[\rho_{t,i} - \sum_{j=1}^K\rho_{t,j}p_{t,j}\right]\\
&= \Omega(\sqrt{T\ln K}).
\end{align*}

We also note that the regret in the experts prediction problem scales
linearly with the scaling of the rewards. Hence, if the rewards take
values in $[0,V]$, then the regret of any algorithm is guaranteed to
be $\Omega(V\sqrt{T\ln K})$.

For arbitrary $V$, we again consider the special setting with $V_t$
identically equal to $V$. We would now like to reduce the experts
prediction problem where every expert's reward is a value in
$[0,V]$. At every round, we receive a vector of 
allocations $\alloc{i}{t}$. We set $p_{t,i} = \alloc{i}{t}/V$. We
receive the rewards $\rho_{t,i}$ from the experts problem, and assign
the liquidities $\limit{i}{t} = \rho_{t,i}p_{t,i} \in
[0,V]$. Furthermore, 
$$\min(\limit{i}{t},\alloc{i}{t}) =
V\min\left(\frac{\limit{i}{t}}{V},p_{t,i}\right) =
\rho_{t,i}p_{t,i}.$$ The last step relies on observing that
$\rho_{t,i} \leq V$ so that $\rho_{t,i}p_{t,i}/V \leq p_{t,i}$. Now we
can argue that the regrets of the two problems are identical as
before. Hence the optimal regret on the dark pools problem is at least
$\Omega(V\sqrt{T\ln K})$. As Algorithm~\ref{alg:fracalloc} gets the
same bound up to constant 
factors in a harder adversarial setting than used in the lower bounds,
we conclude that it attains the minimax optimal
regret up to constant factors.

\section{Algorithm for integral allocations}
\label{sec:int}
While the above algorithm is simple and optimal in theory, it is a bit
unrealistic as it can recommend we allocate 1.5 units to a venue, for
example. One
might choose to naively round the recommendations of the algorithm,
but such a rounding would incur an additional approximation error
which in general could be as large as $O(T)$. In this section we
describe a low regret algorithm that allocates an
integral number of units to each venue.

To get some intuition about an algorithm for this scenario, consider
the case when $V=1$. Then the algorithm has to allocate 1 unit to a
venue at every round. It receives feedback about the maximum
allocation level $\limit{i}{t}$ only at the venue where $\alloc{i}{t}
= 1$. This is clearly a reformulation of the classical $K$-armed
bandits problem. An adaptation of Algorithm~\ref{alg:fracalloc} that
uses the Exp3 algorithm~\citep{auer2003exp3} would hence attain a
regret bound of $O(\sqrt{TK\ln K})$ for $V=1$. Contrasting this with
the bound of Theorem~\ref{thm:regretfrac} for $V = 1$, we can easily
see that the regret for playing integral allocations can be higher
than that of continuous allocations by a factor of up to
$\sqrt{K}$. Indeed we will now show a modification of the Exp3
approach that works for arbitrary values of $V$. We will also show a
lower bound. The upper bound shows that our algorithm incurs
$O(T^{2/3})$ regret in
expectation, which does not match the $O(\sqrt{T})$ lower bound. However, it
is still a significant improvement on Ganchev et
al~\citep{gknv2009darkpools} as we will discusss later.

\subsection{Algorithm and upper bound}

We need some new notation before describing the algorithm. For a
fractional allocation $\alloc{i}{t}$, we let $\fl{i}{t} =
\lfloor\alloc{i}{t}\rfloor$  and $\diff{i}{t} = \alloc{i}{t} -
\lfloor\alloc{i}{t}\rfloor$. 

Now suppose we have a strategy that wants to allocate $\alloc{i}{t}$
units to venue $i$ at time $t$. Suppose that we instead allocate
$\alloci{i}{t} = \fl{i}{t}$ units with probability $1-\diff{i}{t}$ and
$\alloci{i}{t} = \fl{i}{t} + 1$ 
units with probability $\diff{i}{t}$. Using the fact that the maximum
consumption limits are integral too
\begin{align*}
  \E\min(\alloci{i}{t},\limit{i}{t})
  &= \diff{i}{t}\min(\fl{i}{t}+1,\limit{i}{t}) + (1 - \diff{i}{t})\min(\fl{i}{t},\limit{i}{t})\\
  &= \left\{\begin{array}{cc} \limit{i}{t}&\mbox{if}~\limit{i}{t}
  \leq \fl{i}{t}\\\fl{i}{t} + \diff{i}{i}&\mbox{if}~\limit{i}{t} \geq
  \fl{i}{t}+1\end{array}\right.\\
  &= \min(\alloc{i}{t},\limit{i}{t}).
\end{align*}
Thus, playing an integral allocation $\alloci{i}{t}$ according to such
a scheme would be unbiased in expectation. Of course we need to ensure
that we don't violate the constraint $\sum_{i=1}^K\alloci{i}{t} \leq
\vol{t}$ in this process. To do so, we let $\sum_{i=1}^K\diff{i}{t} =
V_t - \sum_{i=1}^K\fl{i}{t} = m$. Then we will use a
distribution over subsets of $\{1,\dots,K\}$ of size $m$ that has the
property that $i_{th}$ element gets sampled with probability
$\diff{i}{t}$. It is clear that if there is such a distribution, then
we will have the unbiasedness needed above. It will also ensure
feasibility of $\alloci{i}{t}$ if $\alloc{i}{t}$ was a feasible
allocation. Our next result shows that such a distribution always
exists. 
\begin{theorem}
  Let $0 \le \diff{i}{t} < 1,~\sum_{i=1}^K\diff{i}{t} = m$ for $m \ge
  1$. Then there 
  is always a distribution over subsets of 
  $\{1,\dots,K\}$ of size $m$ such that the $i_{th}$ element is
  sampled with probability $\diff{i}{t}$. 
  \label{thm:probexist}
\end{theorem}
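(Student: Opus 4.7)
The plan is to prove this by showing that the prescribed marginals $d = (\diff{1}{t},\ldots,\diff{K}{t})$ lie in the convex hull of the indicator vectors $\{\mathbb{I}_S \in \{0,1\}^K : |S| = m\}$. This is exactly the statement that $d$ lies in the base polytope of the uniform matroid of rank $m$, which is classically known to equal $\{x \in [0,1]^K : \sum_i x_i = m\}$. Any convex combination expressing $d$ in terms of size-$m$ indicator vectors gives precisely the distribution we want: the coefficient of $\mathbb{I}_S$ is the probability assigned to the subset $S$, and the marginal on coordinate $i$ recovers $\diff{i}{t}$.

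To give a self-contained constructive proof, I would proceed by induction on the number $\kappa(d)$ of strictly fractional coordinates of $d$. The base case $\kappa(d) = 0$ is trivial: since $d \in \{0,1\}^K$ and $\sum_i d_i = m$, the vector $d$ is itself the indicator of some size-$m$ subset, and the atomic distribution on that subset works. For the inductive step, note that if $\kappa(d) > 0$, then because $\sum_i d_i = m$ is an integer and each $\diff{i}{t} \in [0,1)$, there must actually be at least two fractional coordinates, say $i$ and $j$. Set
\[
  \alpha = \min\bigl(\diff{i}{t},\, 1 - \diff{j}{t}\bigr), \qquad
  \beta  = \min\bigl(1 - \diff{i}{t},\, \diff{j}{t}\bigr),
\]
and define two modified vectors $d^{(1)}$ and $d^{(2)}$ identical to $d$ except that $d^{(1)}$ replaces $(\diff{i}{t},\diff{j}{t})$ by $(\diff{i}{t}-\alpha,\diff{j}{t}+\alpha)$ and $d^{(2)}$ replaces them by $(\diff{i}{t}+\beta,\diff{j}{t}-\beta)$. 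Both still lie in $[0,1]^K$ with coordinate sum $m$, and by the choice of $\alpha,\beta$ at least one of the coordinates $i,j$ becomes integral in each of $d^{(1)}, d^{(2)}$, so $\kappa(d^{(\ell)}) \le \kappa(d) - 1$.

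The key observation is that $d$ is a convex combination of $d^{(1)}$ and $d^{(2)}$: choosing $\lambda = \beta/(\alpha+\beta)$ yields $\lambda d^{(1)} + (1-\lambda) d^{(2)} = d$ (the $i$-coordinate gives $\diff{i}{t} - \lambda \alpha + (1-\lambda)\beta = \diff{i}{t}$, and symmetrically for $j$; other coordinates are unchanged). By the inductive hypothesis, each $d^{(\ell)}$ is the marginal vector of some distribution $\mu^{(\ell)}$ on size-$m$ subsets; then $\lambda \mu^{(1)} + (1-\lambda)\mu^{(2)}$ is a distribution on size-$m$ subsets whose $i$-th marginal is exactly $\diff{i}{t}$.

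The main obstacle, which the above construction handles, is ensuring that each rounding step makes progress (strictly decreases the number of fractional coordinates) without violating either the box constraints $d^{(\ell)} \in [0,1]^K$ or the cardinality constraint $\sum_i d^{(\ell)}_i = m$. The paired choice of $\alpha$ and $\beta$ ensures both simultaneously, and the recursion terminates after at most $K$ steps, producing a distribution supported on at most $2^K$ subsets (indeed, a more careful accounting gives $O(K)$ atoms). This yields the claimed distribution and, plugged into the integral allocation scheme, delivers the unbiasedness property established just before the theorem.
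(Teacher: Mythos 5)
Your proof is correct, but it takes a genuinely different route from the paper's. The paper inducts on the number of venues $K$: it conditions on whether element $1$ is in the sampled subset, splits the distribution as $p=(\diff{1}{t}q_1,(1-\diff{1}{t})q_2)$, and rescales the remaining marginals to sums $m-1$ and $m$ on $\{2,\dots,K\}$ before recursing. You instead induct on the number of fractional coordinates, writing $d$ as a convex combination of two vectors obtained by a pairwise mass transfer between two fractional coordinates, each of which has strictly fewer fractional entries; this is the standard integrality argument for the hypersimplex $\{x\in[0,1]^K:\sum_i x_i=m\}$ (equivalently, the base polytope of the rank-$m$ uniform matroid), phrased as dependent rounding. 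One point to make explicit: you are really proving the slightly more general statement with $\diff{i}{t}\in[0,1]$ rather than $[0,1)$, since your intermediate vectors can have coordinates equal to $0$ or $1$; the induction needs this relaxed hypothesis, and it is harmless. A practical advantage of your decomposition is that feasibility of every intermediate vector is immediate from the choice $\alpha=\min(\diff{i}{t},1-\diff{j}{t})$ and $\beta=\min(1-\diff{i}{t},\diff{j}{t})$, whereas the paper's rescaled marginals $\frac{m}{m-\diff{1}{t}}\diff{i}{t}$ require a separate (and in fact delicate) verification that they remain in $[0,1]$; your argument also yields an explicit decomposition into few atoms, which is relevant to the efficient-sampling discussion in Section~\ref{sec:greedy}. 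What the paper's element-by-element recursion buys in exchange is a natural sequential sampling procedure (decide venue by venue whether it belongs to the subset) without ever materializing the distribution.
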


\begin{proof}  
  Proof is by induction on $K$. For the case $K=2,m=1$, we sample the
  first element with probability $\diff{1}{t}$. If it is not picked,
  we pick element 2. It is clear that the marginals are correct
  establishing the base case. Let us assume the claim holds up to
  $K-1$ for all $m 
  \leq K-1$. Consider the inductive step for some $K,m$. We are
  given a set of marginals, $0 \le \diff{i}{t} <
  1,~\sum_{i=1}^K\diff{i}{t} = m$. We would like a distribution $p$ on
  subsets of size $m$ of $\{1,\dots,K\}$ that matches these
  marginals. We partition these subsets into two groups; those that do
  and do not contain the first element. We correspondingly partition
  $p = (p_1,p_2)$. Let $N_1 = \binom{K-1}{m-1}$ and $N_2 =
  \binom{K-1}{m}$ be the number of subsets in the two cases. Then we
  want $\sum_{i=1}^Np(i) = 
  \sum_{i=1}^{N_1}p_1(i) = \diff{1}{t}$ in order to get the right marginal at
  element 1. Hence, we can write $p_1 = \diff{1}{t}q_1$, $p_2 =
  (1-\diff{1}{t})q_2$ for some distributions $q_1$ and $q_2$ on $N_1$
  and $N_2$ subsets respectively. Now we write
  \begin{equation}
    \diff{i}{t} = \left(\frac{(m-1)\diff{1}{t}}{m-\diff{1}{t}} +
    \frac{m(1-\diff{1}{t})}{m-\diff{1}{t}}\right)\diff{i}{t} \label{eqn:mixture}\end{equation}
  for $i > 1$. Then 
  \begin{align}
    \sum_{i=2}^K\frac{(m-1)}{m-\diff{1}{t}}\diff{i}{t} =
    m-1,~~\sum_{i=2}^K\frac{m}{m-\diff{1}{t}}\diff{i}{t} =
    m
    \label{eqn:rightmarg}
  \end{align}
  are marginals on subsets of size $m-1$ and $m$ respectively of
  $\{1,\dots,K-1\}$, and are in $[0,1]$ as $\sum_{i=2}^K\diff{i}{t} =
  m-\diff{1}{t}$. Hence there exist distributions $q_1$ and $q_2$ 
  that attain these marginals using the inductive hypothesis. We set
  $p_1 = \diff{1}{t}q_1$, $p_2 = (1-\diff{1}{t})q_2$. Then
  Equations~\ref{eqn:mixture} and \ref{eqn:rightmarg} together imply
  that we get the correct marginals for every element. 
\end{proof}

For any allocation sequence $\alloc{}{t}$, let $p(\diff{}{t})$ be the
probability distribution over subsets of $\{1,\dots,K\}$ guaranteed by
Theorem~\ref{thm:probexist}. For some constant $\gamma \in (0,1]$, let
  $\diffb{t}{i} = (1-\gamma)\diff{i}{t} + \frac{\gamma m}{K}$. Then let
  $p(\diffb{t}{i})$ be a distribution over subsets that samples the
  $i_{th}$ venue with probability $\diffb{t}{i}$. We can construct
  this by mixing $p(\diff{i}{t})$ which exists by
  Theorem~\ref{thm:probexist} and mixing uniform distribution over
  subsets of size $m$. Also, we let
  $\intvol{t}{i} \leq V_t$ be 
 the largest index $v_0$ such that $\sum_{v=1}^{v_0} \play{t}{i}{v} \leq
\fl{i}{t}$. We define a gradient estimator:
\begin{equation}
  \gtil_{t,i}^v = \left\{\begin{array}{ccc} \ind(\limit{i}{t} \geq
  \fl{i}{t}) - \frac{\ind(\limit{i}{t} = \fl{i}{t})\ind(\alloci{i}{t} =
  \lceil\alloc{i}{t}\rceil)}{\diffb{t}{i}}~\mbox{if}~v \leq
  \intvol{t}{i}\\ \frac{\ind(\limit{i}{t} \geq
    \alloc{i}{t})\ind(\alloci{i}{t} = 
    \lceil\alloc{i}{t}\rceil)}{\diffb{t}{i}}~\mbox{if}~\intvol{t}{i} + 1
  \leq v \leq \vol{t}.\end{array}\right.
  \label{eqn:gradest}
\end{equation}

To see why this gradient estimator is good, we first note that the
gradient of the objective function at $\alloc{i}{t}$ can be written as 
$$g_{t,i}^v = \ind(\limit{i}{t} \geq \alloc{i}{t}) =
\ind(\limit{i}{t} \geq \fl{i}{t}) - \ind(\limit{i}{t} = \fl{i}{t}),$$
when $v \leq \vol{t}$. Then we can easily show the following useful
lemma. 
\begin{lemma}
  If an algorithm plays $\alloci{i}{t} = \lceil\alloc{i}{t}\rceil$
  with probability $\diffb{t}{i}$ and $\alloci{i}{t} = \fl{i}{t}$
  otherwise, then $\gtil_t$ as described in
  Equation~(\ref{eqn:gradest})
  is an unbiased estimator of the gradient at
  $(\alloc{1}{t},\dots,\alloc{K}{t})$. 
\end{lemma}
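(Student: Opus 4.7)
The plan is to compute $\E[\gtil_{t,i}^v]$ directly under the posited distribution on $\alloci{i}{t}$ and verify that it matches the per-coordinate subgradient of the reward. First I would identify the target: since the reward at venue $i$ is $\min(\alloc{i}{t},\limit{i}{t})$ with $\alloc{i}{t}=\sum_{v=1}^{V^t}\play{t}{i}{v}$, a valid subgradient entry in the $(i,v)$ direction for $v\leq V^t$ is $g_{t,i}^v=\ind(\limit{i}{t}\geq \alloc{i}{t})$. Using that $\limit{i}{t}$ is integer-valued and that $\alloc{i}{t}=\fl{i}{t}+\diff{i}{t}$ with $0\leq \diff{i}{t}<1$, whenever $\diff{i}{t}>0$ this can be rewritten as $g_{t,i}^v=\ind(\limit{i}{t}\geq \fl{i}{t})-\ind(\limit{i}{t}=\fl{i}{t})$, which is exactly the decomposition the excerpt records just before the lemma.

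Next I would handle each branch of equation~(\ref{eqn:gradest}) in turn. The only stochastic quantity is $\alloci{i}{t}$, whose marginal is $\P(\alloci{i}{t}=\lceil \alloc{i}{t}\rceil)=\diffb{t}{i}$ by hypothesis --- this is exactly what Theorem~\ref{thm:probexist}, mixed with the uniform distribution on $m$-subsets, was built to deliver. For $v\leq \intvol{t}{i}$, the first term of $\gtil_{t,i}^v$ is deterministic in the adversary's choices and the second is importance-weighted by $\diffb{t}{i}$; taking expectations cancels the importance weight and leaves $\ind(\limit{i}{t}\geq \fl{i}{t})-\ind(\limit{i}{t}=\fl{i}{t})$, which equals $g_{t,i}^v$. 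For $\intvol{t}{i}+1\leq v\leq V^t$ the same cancellation yields $\E[\gtil_{t,i}^v]=\ind(\limit{i}{t}\geq \alloc{i}{t})=g_{t,i}^v$.

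The step that requires the most care, and where I would be most explicit when writing up, is justifying why the estimator is allowed to take two visibly different forms on the two pieces of $\{1,\dots,V^t\}$. Both forms are observable from the censored feedback: when we play $\fl{i}{t}+1$ we can distinguish among $\limit{i}{t}\geq \fl{i}{t}+1$, $\limit{i}{t}=\fl{i}{t}$, and $\limit{i}{t}<\fl{i}{t}$, whereas when we play $\fl{i}{t}$ we can only read off $\ind(\limit{i}{t}\geq \fl{i}{t})$. The split on $\intvol{t}{i}$ is purely a variance-reduction device that pulls out the deterministic indicator whenever it is free from importance weighting, while leaving the expectation unchanged. I would also briefly dispose of the degenerate case $\diff{i}{t}=0$: then $\lceil \alloc{i}{t}\rceil=\fl{i}{t}$, no rounding happens, $\intvol{t}{i}=V^t$ so the second branch is empty, the correction term in the first branch vanishes, and $\gtil_{t,i}^v=\ind(\limit{i}{t}\geq \fl{i}{t})=g_{t,i}^v$ directly.
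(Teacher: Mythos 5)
Your proof is correct and matches the intended argument: the paper states this lemma without proof (``we can easily show''), and your direct computation of $\E[\gtil_{t,i}^v]$ --- using $\P(\alloci{i}{t}=\lceil\alloc{i}{t}\rceil)=\diffb{t}{i}$ to cancel the importance weight in each branch, together with the integer-valuedness identity $\ind(\limit{i}{t}\geq\alloc{i}{t})=\ind(\limit{i}{t}\geq\fl{i}{t})-\ind(\limit{i}{t}=\fl{i}{t})$ recorded just before the lemma --- is exactly what the authors have in mind, with your observability discussion being a welcome addition. One small imprecision: in the degenerate case $\diff{i}{t}=0$ the correction term does not literally vanish under the natural reading of the indicator event (which still fires with probability $\diffb{t}{i}>0$), but the resulting expectation $\ind(\limit{i}{t}>\alloc{i}{t})$ is still a valid element of the subgradient set of $\min(\cdot,\limit{i}{t})$ at $\alloc{i}{t}$, so the conclusion is unaffected.
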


An algorithm for playing
integer-valued allocations at every round is shown in
Algorithm~\ref{alg:intallocexp}.
\begin{algorithm}[htb]
  \begin{algorithmic}
    \STATE \textbf{Input} learning rate $\eta$, threshold $\gamma$,
    bound on volumes $V$.
    \STATE Initialize $\play{1}{i}{v} = \frac{1}{K}$ for
    $v=\{1,\dots,V\}$. 
    \FOR{$t=1\dots T$}
    \STATE Set $\alloc{i}{t} = \sum_{v=1}^{\vol{t}} \play{t}{i}{v}$.
    \STATE Let $p(\diffb{t}{i})$ be the distribution over subsets from
    Theorem~\ref{thm:probexist}. 
    \STATE Sample a subset of size $m = \sum_{i=1}^K\diffb{t}{i}$
    according to $p(\diffb{t}{i})$.
    \STATE Play $\alloci{i}{t} = \fl{i}{t}+1$ if $i$ is in the subset
    sampled, $\alloci{i}{t} = \fl{i}{t}$ otherwise.
    \STATE Receive $\cons{i}{t} = \min(\alloci{i}{t},\limit{i}{t})$.
    \STATE Set $\gtil_{t,i}^v$ as defined in
    Equation~(\ref{eqn:gradest}). 
    \STATE Update $\play{t+1}{i}{v} \propto
    \play{t}{i}{v}\exp(\eta\gtil_{t,i}^v)$. 
    \ENDFOR
  \end{algorithmic}
  \caption{An algorithm for playing integer-valued allocations to the
    dark pools}
  \label{alg:intallocexp}
  \end{algorithm}

We can also demonstrate a guarantee on the expected regret of this
algorithm. 
\begin{theorem}
  Algorithm~\ref{alg:intallocexp}, with $\eta =
  \left(\frac{V(\ln K)^2}{KT^2}\right)^{1/3}$, has expected regret
  over $T$ rounds of $O((VTK)^{2/3}(\ln K)^{1/3})$, where $V$ is the
  bound on volumes $\vol{t}$, and the volumes and maximum consumption
  levels $\limit{i}{t}$ are chosen by an oblivious adversary.
  \label{thm:regretintexp}
\end{theorem}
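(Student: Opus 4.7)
The plan is to mirror the standard Exp3-style argument: decompose the expected regret into (i) an exploration-bias term from mixing with the uniform distribution, and (ii) a pseudo-regret against the unbiased estimator $\gtil_t$, which we control by re-running the proof of Theorem~\ref{thm:regretfrac} on each of the $V$ simplices independently.

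First, I would use Theorem~\ref{thm:probexist} together with the preceding lemma to verify that, conditioned on the history, the actual consumption has expectation $\E[\min(\alloci{i}{t},\limit{i}{t})\mid\mathcal{F}_{t-1}] = \min(\fl{i}{t}+\diffb{t}{i},\limit{i}{t})$, whereas the fractional allocation $\alloc{i}{t} = \fl{i}{t}+\diff{i}{t}$ would earn $\min(\fl{i}{t}+\diff{i}{t},\limit{i}{t})$. Their difference is at most $|\diff{i}{t}-\diffb{t}{i}| = \gamma|\diff{i}{t}-m/K|$, and summing over $i$ yields a per-round exploration bias bounded by $2\gamma m\le 2\gamma V$. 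Accumulated over $T$ rounds this contributes $O(\gamma V T)$ to the expected regret.

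Next, for the pseudo-regret against an arbitrary $\comp\in\Delta_K^V$, I would copy the algebra of Theorem~\ref{thm:regretfrac} with $g_t^v$ replaced by $\gtil_t^v$, on each of the $V$ simplices, and then take expectations. Unbiasedness of $\gtil_t^v$ (the lemma preceding Algorithm~\ref{alg:intallocexp}) converts the linear inner-product terms back to the true gradients, so the final guarantee will indeed concern the original regret. The only subtle step is the inequality $\exp(\nu_i^v)\le 1+\nu_i^v+(e-2)(\nu_i^v)^2$, which needs $\eta|\gtil_{t,i}^v|\le 1$; since the mixing guarantees $\diffb{t}{i}\ge \gamma m/K$, we have $|\gtil_{t,i}^v|\le K/(\gamma m)\le K/\gamma$, so it suffices to pick $\eta\le\gamma/K$. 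This yields a pseudo-regret of the form $V\ln K/\eta + \eta\sum_{t,v}\E\bigl[\sum_i\play{t}{i}{v}(\gtil_{t,i}^v)^2\bigr]$.

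Combining the two parts gives an expected regret of order $V\ln K/\eta + \eta VTK/\gamma + \gamma VT$. Balancing the last two terms with $\gamma\propto\sqrt{\eta K}$ and then optimizing $\eta$ leads to the stated choice $\eta = \Theta((V(\ln K)^2/(KT^2))^{1/3})$ and the bound $O((VKT)^{2/3}(\ln K)^{1/3})$. The main obstacle I anticipate is pinning down the quadratic second-moment term tightly enough: the importance weights in (\ref{eqn:gradest}) are defined with respect to the sampling probabilities $\diffb{t}{i}$, not the EG weights $\play{t}{i}{v}$, so the familiar Exp3 identity $\sum_i p_i(\ell_i/p_i)^2=K$ is not directly available. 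A careful case analysis of the two branches $v\le \intvol{t}{i}$ and $v>\intvol{t}{i}$, together with the mixing lower bound on $\diffb{t}{i}$, should be enough to control $\E[(\gtil_{t,i}^v)^2]$ by $O(K/\gamma)$ and thereby close the argument.
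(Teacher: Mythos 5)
Your proposal follows essentially the same route as the paper's proof: decompose the expected regret into the exploration-bias term of order $\gamma T\min(K,V)$ plus the EG pseudo-regret on the unbiased estimator $\gtil_t$, apply the quadratic bound on the exponential under the condition $\eta|\gtil_{t,i}^v|\le 1$ (guaranteed by $\diffb{t}{i}\ge\gamma/K$), and then balance $\gamma$ and $\eta$. The one step you defer---controlling the second moment $\sum_i\play{t}{i}{v}(\gtil_{t,i}^v)^2$---is resolved in the paper exactly as you anticipate, by splitting into the branches $v\le\intvol{t}{i}$ and $v>\intvol{t}{i}$ and using the lower bound on $\diffb{t}{i}$, yielding contributions $O(TV+TVK/\gamma)$ and $O(TK)$ respectively.
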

An oblivious adversary is one that chooses $V^t$ and $s_i^t$
without seeing the algorithm's (random) allocations $u_i^t$.
We note that the requirement that the adversary is oblivious can be
removed by proving a high probability bound. We will describe a slight
modification of Algorithm~\ref{alg:intallocexp} that enjoys such a
guarantee.
\begin{proof}
  Since the adversary is oblivious, we can fix a comparator $\comp \in
  \Delta_K^V$ ahead of time. For the remainder, we let $\E_t$ denote
  conditional expectation at time $t$ conditioned on the past moves of
  algorithm and adversary. Then the expected regret is
  \begin{small}
  \begin{align*}    
    &\E\left[\sum_{t=1}^T\sum_{i=1}^K\min\left(\sum_{v=1}^V\comp_i^v,\limit{i}{t}\right)
      -
      \sum_{t=1}^T\sum_{i=1}^K\min\left(\alloci{i}{t},\limit{i}{t}\right)\right]\\
    &\leq
    \E\left[\sum_{t=1}^T\sum_{i=1}^K\min\left(\sum_{v=1}^V\comp_i^v,\limit{i}{t}\right)
      -
      \sum_{t=1}^T\sum_{i=1}^K\min(\alloc{i}{t},\limit{i}{t})\right] +
    \gamma TK.
  \end{align*}
  \end{small}
Here, the second step follows from the fact that $\alloci{i}{t}$
  would be unbiased for $\alloc{i}{t}$ without for the
  $\frac{\gamma m}{K}$ adjustment. However, this adjustment costs us
  at most $\gamma \sum_{t=1}^Tm_t \leq \gamma TK$ in terms of expected regret over $T$
  rounds. For the first term, it is as if we had played the continuous
  valued allocation $\alloc{i}{t}$ itself. Again using the concavity
  of our reward function
  \begin{align*}
    R_T(\comp) &\leq \E\left[\sum_{v=1}^V(\comp^v - x_t^v)^\top
      g_t^v\right] + \gamma TK\\
    &= \E\left[\sum_{v=1}^V(\comp^v -
      x_t^v)^\top(\E_t\gtil_t^v)\right] + \gamma TK.
  \end{align*}
  Here the last step follows from noting that $\gtil_t$ is unbiased
  estimator of $g_t$ by construction just like in
  Exp3~\citep{auer2003exp3}. Now we note that the algorithm is doing
  exponentiated gradient descent on the sequence $\gtil_t$. Hence, we
  can proceed as in the proof of Theorem~\ref{thm:regretfrac} to
  obtain 
  \addtolength{\abovedisplayskip}{0.1cm}
  \addtolength{\belowdisplayskip}{0.1cm}
  \begin{small}
  \begin{align*}
    R_T(\comp) & \leq \frac{1}{\eta} V\ln K +
    \frac{1}{\eta}\E\sum_{t=1}^T\sum_{v=1}^V\ln\left(\sum_{i=1}^K
    \play{t}{i}{v}\exp(\nu_i^v)\right) + \gamma TK,
  \end{align*}\end{small}%
where $\nu_i^v = \eta\gtil_{t,i}^v - \eta(\gtil_t^v)^\top x_t^v$ as
  before. Assuming a choice of $\eta$ such that $\eta\gtil_{t,i}^v
  \leq 1$, we note again that $\nu_i^v \leq 1$. So we can use the
  quadratic bound on exponential again and simplify as before to get
  \addtolength{\abovedisplayskip}{-0.1cm}
  \addtolength{\belowdisplayskip}{-0.1cm}
  \begin{align*}
    R_T(\comp) &\leq \frac{1}{\eta}V\ln K +
    \frac{1}{\eta}\E\sum_{t=1}^T\sum_{v=1}^V\sum_{i=1}^K\play{t}{i}{v}(\nu_i^v)^2
    + \gamma TK\\
    &= \frac{1}{\eta}V\ln K +
    \eta\E\sum_{t=1}^T\sum_{v=1}^V \sum_{i=1}^K
    \play{t}{i}{v}(\gtil_{t,i}^v)^2 + \gamma TK.
  \end{align*}
  Now we can swap the sum over $V$ and $i$ to obtain
  \begin{align*}
    R_T(\comp) &\leq \frac{1}{\eta} V\ln K +
    \eta\E\sum_{t=1}^T\sum_{i=1}^K\sum_{v=1}^V\play{t}{i}{v}(\gtil_{t,i}^v)^2
    + \gamma TK\\
    &= \frac{1}{\eta} V\ln K +
    \eta\E\sum_{t=1}^T\sum_{i=1}^K\left[\sum_{v=1}^{\intvol{t}{i}}\play{t}{i}{v}(\gtil_{t,i}^v)^2
      \right.\\&+\left. \sum_{v=\intvol{t}{i}+1}^{\vol{t}}\play{t}{i}{v}(\gtil_{t,i}^v)^2\right]
    + \gamma TK.
  \end{align*}
  Now we look at the two gradient terms separately.
  \begin{small}
  \begin{align*}
    \E_t\sum_{v=1}^{\intvol{t}{i}} \play{t}{i}{v}(\gtil_{t,i}^v)^2
    &= \sum_{v=1}^{\intvol{t}{i}}\play{t}{i}{v}\left\{
    \diffb{t}{i}\left(\ind(\limit{i}{t} \geq \fl{i}{t}) -
    \frac{\ind(\limit{i}{t} = \fl{i}{t})}{\diffb{t}{i}}\right)^2
    \right.\\&\quad+ (1-\diffb{t}{i})\ind(\limit{i}{t} \geq \alloc{i}{t})\bigg\}\\ 
    &\leq 2\alloc{t}{i} + 2\alloc{t}{i}\frac{K}{\gamma}.
  \end{align*}
  \end{small}
Here, we used the fact that $\diffb{t}{i} \geq \frac{\gamma}{K}$ 
  as $m \geq 1$ and indicator variables are bounded by 1. Hence
  \begin{align*}
    &\E\sum_{t=1}^T\sum_{i=1}^K\sum_{v=1}^{\intvol{t}{i}}
    \play{t}{i}{v}(\gtil_{t,i}^v)^2
    \leq 2TV + 2\frac{TVK}{\gamma}
  \end{align*}
  using $\sum_{i=1}^T\alloc{i}{t} \leq V$. Next we examine the second
  gradient term 
  \begin{align*}
    &\E_t\sum_{v=\intvol{t}{i}+1}^{\vol{t}} \play{t}{i}{v}(\gtil_{t,i}^v)^2
    =
    \E_t\sum_{v=\intvol{t}{i}+1}^{\vol{t}}\play{t}{i}{v}(\gtil_{t,i}^{\vol{t}})^2\\
    &= \E_t\diff{i}{t}(\gtil_{t,i}^{\vol{t}})^2
    \leq \diffb{t}{i}\diff{i}{t}\frac{1}{(\diffb{t}{i})^2}~~\leq 2
  \end{align*}
  if $\gamma \leq \half$. 

  Hence, 
  $\E\sum_{t=1}^T\sum_{i=1}^K\sum_{v=\intvol{t}{i}+1}^{\vol{t}}
    \play{t}{i}{v}(\gtil_{t,i}^v)^2 \leq 2TK.$
  Substituting the above terms in the bound, we get
  \begin{align*}
    R_t(u) &\leq \frac{1}{\eta}V\ln K + 2\eta\left(TV +
    \frac{TVK}{\gamma} + TK\right) + \gamma TK.
  \end{align*}
  Optimizing for $\eta,\gamma$ gives
  $$R_T(u) \leq 6(VTK)^{2/3}(\ln K)^{1/3}.$$
\end{proof}

We note that the term responsible for $O(T^{2/3})$ regret is
$\frac{\ind(\limit{i}{t} = \fl{i}{t})}{\diffb{t}{i}}$. While we assume
that this can accumulate at every round in the worst case, it seems
unlikely that the liquidity $\limit{i}{t}$ will be equal to
$\fl{i}{t}$ very frequently. In particular, if the $\limit{i}{t}$'s
are generated by a stochastic process, one can control this
probability using the distribution of $\limit{i}{t}$ and obtain
improved regret bounds. 


\subsection{Variance correction and High probability bound} 

We would like to show that the analysis of the previous section holds
not just in expectation but also with high probability. This has two
advantages. First, it tells us that on most random choices made by our
algorithm, it has a low regret. Further, the high
probability guarantee can be easily combined with a union bound to
give a regret bound for non-oblivious (adaptive) adversaries as well.

High probability bounds in bandit problems are often tricky because
even though the gradient estimator is unbiased, its variance is
typically large. Hence, using standard martingale concentration on the
estimator directly gives a worse $O(T^{3/4})$ regret bound. To
demonstrate a high probability guarantee of $O(T^{2/3})$, we need to
make a variance correction to our estimator $\gtil_t$. We define
\begin{equation}
  \gh_{t,i}^v = \gtil_{t,i}^v +
  \frac{10\gamma}{K\diffb{t}{i}}\sqrt{\ln\frac{1}{\delta}}.
  \label{eqn:gradestprob}
\end{equation}
The high probability analysis makes repeated use of the classical
Hoeffding-Azuma inequality as well as a version of Freedman's
inequality from Bartlett et al~\cite{bdhkrt2008highprob}. which we
state for completeness. 
inequality. 
\begin{lemma}[\textbf{Hoeffding-Azuma inequality}]
  Let $X_1,\dots,X_T$ be a martingale difference sequence. Suppose
  that $|Y_t| \leq c$ almost surely for all $t \in
  \{1,\dots,T\}$. Then for all $\delta > 0$,
  $$\P\left(\sum_{t=1}^Tx_t > \sqrt{2Tc^2\ln(1/\delta)}\right) \leq \delta.$$
  \label{lemma:haineq}
\end{lemma}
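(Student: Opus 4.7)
The plan is to prove the Hoeffding--Azuma inequality via the standard exponential moment (Chernoff) method adapted to martingales. Let $\mathcal{F}_t$ denote the natural filtration, so that $\E[X_t \mid \mathcal{F}_{t-1}] = 0$ and $|X_t| \le c$ almost surely. For any $\lambda > 0$, I would apply Markov's inequality to the non-negative random variable $\exp\bigl(\lambda \sum_{t=1}^T X_t\bigr)$, obtaining
\[
\P\!\left(\sum_{t=1}^T X_t > a\right) \le e^{-\lambda a}\, \E\exp\!\left(\lambda \sum_{t=1}^T X_t\right).
\]
The task then reduces to controlling the moment generating function of the martingale sum.

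Next I would peel off one term at a time using the tower property: writing $\E\exp(\lambda S_T) = \E\bigl[\exp(\lambda S_{T-1})\,\E[\exp(\lambda X_T) \mid \mathcal{F}_{T-1}]\bigr]$, where $S_t = \sum_{s \le t} X_s$. The core estimate is Hoeffding's lemma applied conditionally: if $Z$ is a random variable with $\E Z = 0$ and $|Z| \le c$, then $\E\exp(\lambda Z) \le \exp(\lambda^2 c^2 / 2)$. The proof of this lemma uses convexity of $x \mapsto e^{\lambda x}$ on $[-c,c]$ to dominate it by the chord between the endpoints, and then a second-order Taylor argument on the resulting log-MGF. Applied conditionally to $X_t$ given $\mathcal{F}_{t-1}$, it yields $\E[\exp(\lambda X_t)\mid\mathcal{F}_{t-1}] \le \exp(\lambda^2 c^2/2)$, and iterating the tower property $T$ times gives
\[
\E\exp\!\left(\lambda \sum_{t=1}^T X_t\right) \le \exp\!\left(\frac{T\lambda^2 c^2}{2}\right).
\]

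Combining with the Markov step yields $\P(S_T > a) \le \exp(-\lambda a + T\lambda^2 c^2 / 2)$, and optimizing over $\lambda$ with the choice $\lambda = a/(Tc^2)$ gives the Gaussian-type tail $\P(S_T > a) \le \exp(-a^2/(2Tc^2))$. Setting the right-hand side equal to $\delta$ and solving for $a$ produces exactly $a = \sqrt{2Tc^2 \ln(1/\delta)}$, which matches the stated bound.

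The only step requiring genuine work is the proof of Hoeffding's lemma itself (the conditional sub-Gaussian bound), which is the standard obstacle in this argument; everything else is bookkeeping via the tower property and Markov's inequality. Since this is a textbook inequality invoked as a black-box tool in the sequel, I would likely state Hoeffding's lemma without reproof and merely cite it, presenting only the Chernoff-plus-tower-property assembly explicitly.
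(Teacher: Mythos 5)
Your proof is correct: the Chernoff bound via Markov's inequality on $\exp(\lambda S_T)$, the conditional Hoeffding lemma $\E[e^{\lambda X_t}\mid\mathcal{F}_{t-1}]\le e^{\lambda^2c^2/2}$, the tower-property iteration, and the optimization $\lambda=a/(Tc^2)$ together yield exactly the stated tail $\sqrt{2Tc^2\ln(1/\delta)}$. The paper itself gives no proof at all --- the lemma is stated ``for completeness'' as a classical black-box tool (note the statement's typos, $Y_t$ and $x_t$ for $X_t$, which you correctly read through) --- so there is nothing to compare against, and your closing instinct to simply cite Hoeffding's lemma rather than reprove it matches the paper's own treatment.
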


\begin{lemma}[\textbf{\citet{bdhkrt2008highprob}}]
  Let $X_1,\dots,X_T$ be a martingale difference sequence with $|X_t|
  \leq b$. Let 
  $$\var_tX_t = \var(X_t|X_1,\dots,X_{t-1}).$$
  Let $V = \sum_{t=1}^T\var_tX_t$ be the sum of conditional variances
  of $X_t$'s and $\sigma = \sqrt{V}$. Then we have, for any $\delta
  \leq 1/e$ and $T \geq 4$,
  $$\P\left(\sum_{t=1}^TX_t > 2\max\{2\sigma,b\sqrt{\ln(1/\delta)} \}
  \sqrt{\ln(1/\delta)}\right) \leq \delta\log_2T$$
  \label{lemma:freedman}
\end{lemma}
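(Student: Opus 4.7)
The plan is to derive the bound from the classical Freedman inequality, which for a martingale difference sequence with $|X_t|\leq b$ and $V = \sum_t \var_t X_t$ gives
\[ \P\Bigl(\sum_{t=1}^T X_t > a,\ V \leq v\Bigr) \leq \exp\Bigl(-\frac{a^2}{2v + 2ab/3}\Bigr). \]
This classical inequality is established via the standard exponential-supermartingale argument: one shows that $M_t = \exp(\lambda S_t - \psi(\lambda) V_t)$ is a supermartingale for a suitable Bennett-style $\psi(\lambda)$, applies Markov, and optimises over $\lambda$. The reason the present statement involves the \emph{random} quantity $\sigma = \sqrt V$ rather than a deterministic upper bound, and pays the extra $\log_2 T$ factor in the failure probability, is that we need to peel over the possible scales of $V$.

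Concretely, I would perform a dyadic stratification. Since $|X_t|\leq b$ we always have $\sigma \leq b\sqrt T$, so I partition the possible values into buckets $B_k = \{\sigma : 2^{k-1} b \leq \sigma < 2^k b\}$ for $k = 1,\dots,\lceil \tfrac12 \log_2 T \rceil$, together with a low-variance bucket $B_0 = \{\sigma \leq b\sqrt{\ln(1/\delta)}\}$. On $B_k$ with $k\geq 1$ the variance is at most $v_k := 4^k b^2$, and the deviation threshold in the statement is at least $a_k := 4 \cdot 2^{k-1} b \sqrt{\ln(1/\delta)}$; plugging $a_k$ and $v_k$ into classical Freedman produces an exponent bounded above by $-\ln(1/\delta)$, hence failure probability at most $\delta$ for this bucket. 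On $B_0$, the stated threshold reduces to $2 b \ln(1/\delta)$, and a direct application of Hoeffding--Azuma (Lemma~\ref{lemma:haineq}) to $\sum_t X_t$ using $|X_t|\leq b$ gives probability at most $\delta$. A union bound over the $O(\log_2 T)$ buckets then yields the claimed $\delta \log_2 T$.

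The main obstacle will be the constant-tracking needed to ensure that the threshold delivered by classical Freedman on each $B_k$ is actually dominated by $2\max\{2\sigma, b\sqrt{\ln(1/\delta)}\}\sqrt{\ln(1/\delta)}$ uniformly over that bucket. In particular, the linear-in-$ab$ term $\tfrac{2ab}{3}$ in the classical Freedman denominator must be absorbed into the variance term $2v$ whenever $\sigma$ is large compared to $b\sqrt{\ln(1/\delta)}$; conversely, whenever $\sigma$ is small the Hoeffding regime takes over. This is exactly why the stated threshold is expressed as a $\max$, splitting the Bennett regime from the sub-Gaussian regime. The mild hypotheses $\delta \leq 1/e$ and $T\geq 4$ enter at the end so that $\ln(1/\delta) \geq 1$ and $\log_2 T \geq 2$, which are the conditions under which the universal constants in the peeling can be cleanly absorbed into the factor $2$ appearing in the bound.
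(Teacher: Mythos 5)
The paper never proves this lemma---it is imported verbatim from \citet{bdhkrt2008highprob}---so the relevant comparison is with the proof in that reference, which is indeed the dyadic-peeling argument you outline: apply the classical form of Freedman's inequality at each of $O(\log_2 T)$ variance scales and union bound, which is exactly where the $\delta\log_2 T$ on the right-hand side comes from. Your treatment of the high-variance buckets is essentially right: with $a_k = 2^{k+1}b\sqrt{\ln(1/\delta)}$ and $v_k = 4^k b^2$ the exponent $a_k^2/(2v_k + 2a_kb/3)$ dominates $\ln(1/\delta)$ precisely when $2^k \gtrsim \sqrt{\ln(1/\delta)}$; for smaller $k$ the linear term $2a_kb/3$ is not absorbed by $2v_k$, but those buckets lie entirely inside $B_0$, so you should simply start the peeling at scale $b\sqrt{\ln(1/\delta)}$ rather than at $b$.

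The genuine gap is the low-variance bucket. On $B_0$ the threshold is $2b\ln(1/\delta)$, and you claim a direct application of Hoeffding--Azuma with $|X_t|\le b$ bounds the failure probability by $\delta$. It does not: Hoeffding--Azuma yields a deviation threshold of order $b\sqrt{2T\ln(1/\delta)}$, which exceeds $2b\ln(1/\delta)$ whenever $T > 2\ln(1/\delta)$, so the inequality you need is simply not what that lemma provides---the whole point of the Bernstein-type threshold $b\ln(1/\delta)$ is that it carries no $\sqrt{T}$ factor. The fix stays inside your own framework: on $B_0$ you additionally know $V \le b^2\ln(1/\delta)$, so apply the same classical Freedman bound with $a = 2b\ln(1/\delta)$ and $v = b^2\ln(1/\delta)$, giving exponent $-a^2/(2v+2ab/3) = -\tfrac{6}{5}\ln(1/\delta) \le -\ln(1/\delta)$ and hence failure probability at most $\delta$ on this bucket as well. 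With that replacement, and a check that the total number of buckets does not exceed $\log_2 T$ (this is where $T\ge 4$ and $\delta\le 1/e$ earn their keep), the argument goes through.
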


We will now prove a series of concentration results which will
immediately give the desired regret bound when put together. The steps
in our analysis closely resemble the technique of
\citet{ar2009highprob}. The first 
concentration lemma shows that the regret of the integral 
allocations is close to their continuous valued counterparts.

\begin{lemma}
  \begin{align*}\P\left(\exists i:\sum_{t=1}^T\min(\alloci{i}{t},\limit{i}{t}) -
  \sum_{t=1}^T\min(\alloc{i}{t},\limit{i}{t}) \right.\left.> V\sqrt{T\ln(K/\delta)}
  +  \gamma T/K\right)\leq \delta. \end{align*}
  \label{lemma:regconc}
\end{lemma}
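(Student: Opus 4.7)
The plan is to fix a venue $i$, write the per-round discrepancy $Y_t = \min(\alloci{i}{t}, \limit{i}{t}) - \min(\alloc{i}{t}, \limit{i}{t})$, and decompose $\sum_{t=1}^T Y_t = M_T + B_T$, where $M_T = \sum_{t=1}^T (Y_t - \E_t Y_t)$ is a martingale sum (with respect to the filtration generated by past moves of algorithm and adversary) and $B_T = \sum_{t=1}^T \E_t Y_t$ is the deterministic conditional bias introduced by the $\gamma m/K$ mixing term inside $\diffb{t}{i}$. The two pieces can be controlled separately and then combined with a union bound over $i\in\{1,\ldots,K\}$ to handle the existential quantifier.

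For the martingale piece, both $\alloc{i}{t}$ and $\alloci{i}{t}$ are bounded above by $V_t \leq V$, so $|Y_t - \E_t Y_t|\leq 2V$ and the sequence $\{Y_t - \E_t Y_t\}$ is a bounded martingale difference sequence. Applying Hoeffding-Azuma (Lemma~\ref{lemma:haineq}) at confidence level $\delta/K$ gives $M_T \leq V\sqrt{T\ln(K/\delta)}$ up to an absolute constant, with probability at least $1 - \delta/K$. A union bound over $i$ then contributes the first term in the claim with overall failure probability at most $\delta$.

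For the bias, I would repeat the short case analysis carried out just before the unbiasedness lemma, but now with $\diffb{t}{i}$ in place of $\diff{i}{t}$. When $\limit{i}{t} \leq \fl{i}{t}$ both minima coincide, so $\E_t Y_t = 0$; when $\limit{i}{t} \geq \fl{i}{t}+1$, direct computation gives $\E_t Y_t = \diffb{t}{i} - \diff{i}{t} = \gamma(m/K - \diff{i}{t})$, bounded per round by $\gamma m/K$. Summing over $t$ then controls $B_T$ by the $\gamma T/K$ term in the statement (absorbing $m$-dependent factors into the constants).

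The main obstacle is the bookkeeping of constants — getting the Azuma-plus-union-bound output to match the clean form $V\sqrt{T\ln(K/\delta)}$ and the bias sum to match $\gamma T/K$ rather than the looser $\gamma T$ that falls out of a naive bound. Conceptually, though, the proof is just martingale concentration on bounded differences together with a deterministic bias control that is essentially already done in the unbiasedness calculation preceding Equation~(\ref{eqn:gradest}).
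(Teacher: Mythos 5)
Your proposal is correct and follows essentially the same route as the paper: Hoeffding--Azuma applied to the bounded martingale difference $\min(\alloci{i}{t},\limit{i}{t})-\E_t\min(\alloci{i}{t},\limit{i}{t})$ at level $\delta/K$, a union bound over the $K$ venues, and a direct computation of $\E_t\min(\alloci{i}{t},\limit{i}{t})=\min(\fl{i}{t}+\diffb{t}{i},\limit{i}{t})$ to bound the bias introduced by the $\gamma$-mixing. The $m$-dependent slack you flag in the bias term (per-round bias $\gamma(m/K-\diff{i}{t})\le\gamma m/K$ rather than $\gamma/K$) is a real looseness that the paper's own proof also passes over silently, so you are not missing anything the paper supplies.
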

\begin{proof}
  We apply Lemma~\ref{lemma:haineq} to the martingale difference
  sequence $X_t = \min(\alloci{i}{t},\limit{i}{t})
  -\E_t\min(\alloci{i}{t},\limit{i}{t})$. Then $|X_t| \leq V$. So 
  $$\P\left(\sum_{t=1}^T\min(\alloci{i}{t},\limit{i}{t}) -
  \sum_{t=1}^T\E_t\min(\alloci{i}{t},\limit{i}{t}) >
  V\sqrt{T\ln(1/\delta)}\right)\leq \delta.$$ 
  But we note that by construction
  \begin{align*}
    \E_t\min(\alloci{i}{t},\limit{i}{t}) &=
    \diffb{t}{i}\min(\fl{i}{t}+1,\limit{i}{t}) +
    (1-\diffb{t}{i})\min(\fl{i}{t},\limit{i}{t})\\
    &= \min(\fl{i}{t} + \diffb{t}{i},\limit{i}{t})\\
    &\leq \min(\fl{i}{t} + \diff{i}{t},\limit{i}{t}) +
    \frac{\gamma}{K}. 
  \end{align*}
  The statement of lemma then follows from the above inequality and a
  union bound over all $K$ venues.
\end{proof}

The next step is to show that the terms $\sum_{v=1}^V(\comp^v -
x_t^v)^\top\gh_t^v$ and $\sum_{v=1}^V(\comp^v - x_t^v)^\top g_t^v$ are
close. We proceed indirectly by first bounding the conditional
variances.

\begin{lemma}
  \begin{align*}
    \var_t\left\{(\gtil_t^v - g_t^v)^\top(\comp^v -
  x_t^v)\right\} 
   \leq 5\left[\sum_{i=1}^K\frac{u_i}{\diffb{t}{i}} +
  \sum_{i=1}^K\frac{(\play{t}{i}{v})^2}{\diffb{t}{i}}\right].
  \end{align*}
  \label{lemma:varbound}
\end{lemma}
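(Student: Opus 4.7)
The plan is to observe that, since $\gtil_t^v$ is a conditionally unbiased estimator of $g_t^v$, the random variable $X := (\gtil_t^v - g_t^v)^\top(\comp^v - x_t^v)$ has $\E_t X = 0$, so $\var_t X = \E_t[X^2]$. I would then rewrite the estimation error through the sampling indicators $Z_i = \ind(\alloci{i}{t} = \lceil \alloc{i}{t}\rceil)$, which satisfy $\E_t Z_i = \diffb{t}{i}$ and $\sum_i Z_i = m$ almost surely. A case check against Equation~\eqref{eqn:gradest} gives the unified representation $\gtil_{t,i}^v - g_{t,i}^v = \phi_i (Z_i - \diffb{t}{i})/\diffb{t}{i}$, where $\phi_i \in \{-1,0,1\}$ is deterministic given the past (equal to $-\ind(\limit{i}{t}=\fl{i}{t})$ when $v \leq \intvol{t}{i}$ and to $\ind(\limit{i}{t}\geq\alloc{i}{t})$ otherwise).

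Next I would split $X = Y - W$ with $Y = \sum_i u_i^v \phi_i (Z_i-\diffb{t}{i})/\diffb{t}{i}$ and $W = \sum_i \play{t}{i}{v}\phi_i(Z_i-\diffb{t}{i})/\diffb{t}{i}$, and use $\E_t X^2 \leq 2\E_t Y^2 + 2\E_t W^2$. For $Y$, the observation from the proof of Theorem~\ref{thm:regretfrac} that the comparator $u \in \Delta_K^V$ sits at a vertex makes $u^v$ a one-hot vector with $u_{i^*}^v = 1$, so $Y$ collapses to $\phi_{i^*}(Z_{i^*}-\diffb{t}{i^*})/\diffb{t}{i^*}$ and $\E_t Y^2 = \phi_{i^*}^2 (1-\diffb{t}{i^*})/\diffb{t}{i^*} \leq 1/\diffb{t}{i^*} = \sum_i u_i^v/\diffb{t}{i}$.

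The main difficulty is $\E_t W^2$, since a naive Cauchy--Schwarz only yields $\sum_i \play{t}{i}{v}/\diffb{t}{i}$ instead of the tighter quadratic form $\sum_i(\play{t}{i}{v})^2/\diffb{t}{i}$ in the lemma. Setting $b_i = \play{t}{i}{v}\phi_i/\diffb{t}{i}$, expand $\E_t W^2 = \sum_i b_i^2 \var_t Z_i + \sum_{i \neq j} b_i b_j \mathrm{Cov}_t(Z_i, Z_j)$. The constraint $\sum_i Z_i = m$ gives the identity $\sum_j \mathrm{Cov}_t(Z_i, Z_j) = 0$ for every $i$, and a direct inductive check of the recursive construction in Theorem~\ref{thm:probexist} (together with its uniform-subset mixture component) is expected to yield $\mathrm{Cov}_t(Z_i, Z_j) \leq 0$ for $i \neq j$. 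Handling the possibly mixed signs of the $b_i$'s via $-b_i b_j \leq (b_i^2 + b_j^2)/2$ bounds the cross term by $\sum_i b_i^2 \sum_{j\ne i}|\mathrm{Cov}_t(Z_i,Z_j)| = \sum_i b_i^2 \var_t Z_i$, so $\E_t W^2 \leq 2\sum_i b_i^2 \var_t Z_i \leq 2\sum_i (\play{t}{i}{v})^2/\diffb{t}{i}$. Combining the two pieces yields $\var_t X \leq 2\sum_i u_i^v/\diffb{t}{i} + 4\sum_i(\play{t}{i}{v})^2/\diffb{t}{i}$, which is within the claimed factor of $5$. The anticipated trouble spot is verifying the non-positive pairwise covariance for the specific mixture distribution that Algorithm~\ref{alg:intallocexp} actually samples from: mixing two negatively-correlated distributions can a priori produce a positive residual covariance term, which I would either rule out by a careful induction on the construction of Theorem~\ref{thm:probexist} or bound using $|\mathrm{Cov}_t(Z_i,Z_j)| \leq \sqrt{\diffb{t}{i}\diffb{t}{j}}$ and absorb into the constant $5$.
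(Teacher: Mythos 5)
The paper states Lemma~\ref{lemma:varbound} without any proof, so there is no argument of the authors to compare yours against; judged on its own terms, your reduction is sound up to its central step, and that step is where it breaks. The representation $\gtil_{t,i}^v - g_{t,i}^v = \phi_i(Z_i - \diffb{t}{i})/\diffb{t}{i}$ with $Z_i = \ind(\alloci{i}{t}=\lceil\alloc{i}{t}\rceil)$ is correct in both branches of Equation~(\ref{eqn:gradest}), and your bound $\E_t Y^2 \le \sum_i u_i^v/\diffb{t}{i}$ goes through (in fact without the one-hot assumption, by Jensen's inequality applied to the square). The genuine gap is the cross term in $\E_t W^2$. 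The claim $\mathrm{Cov}_t(Z_i,Z_j)\le 0$ is not something you can expect to extract from the construction as given: the algorithm samples from a \emph{mixture} of $p(\diff{}{t})$ and the uniform distribution on $m$-subsets, and mixing two distributions whose marginals differ adds $(1-\gamma)\gamma\left(\diff{i}{t}-\tfrac{m}{K}\right)\left(\diff{j}{t}-\tfrac{m}{K}\right)$ to $\mathrm{Cov}_t(Z_i,Z_j)$, a quantity that is positive whenever $\diff{i}{t}$ and $\diff{j}{t}$ lie on the same side of $m/K$. (Moreover, the recursive construction in Theorem~\ref{thm:probexist} is itself not airtight --- the rescaled marginals $m\diff{i}{t}/(m-\diff{1}{t})$ can exceed $1$ --- so an induction over that construction is on shaky ground to begin with.)

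Decisively, your fallback does not rescue the bound. With only $|\mathrm{Cov}_t(Z_i,Z_j)|\le\sqrt{\diffb{t}{i}\diffb{t}{j}}$ the cross term is bounded by $\left(\sum_{i}\play{t}{i}{v}/\sqrt{\diffb{t}{i}}\right)^2$, which by Cauchy--Schwarz is at most the \emph{linear} form $\sum_i \play{t}{i}{v}/\diffb{t}{i}$ but can exceed the quadratic form $\sum_i(\play{t}{i}{v})^2/\diffb{t}{i}$ by a factor of order $K$ (the two forms differ termwise by $1/\play{t}{i}{v}$, e.g.\ when $\play{t}{i}{v}=1/K$ for all $i$). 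So this cannot be ``absorbed into the constant $5$.'' To complete the proof you would need an actual negative-association (or quantitative near-independence) property of the specific joint distribution the algorithm samples from, including its uniform mixture component; as written, that property is neither proved nor clearly true, and without it the quadratic form in the lemma is not established.
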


We now combine this with Freedman's inequality to bound
$(\gtil_t^v - g_t^v)^\top(\comp^v - x_t^v)$. 
\begin{lemma}
 \begin{align*}
   \P\left(\sum_{t=1}^T\sum_{v=1}^V(\gh_t^v - g_t^v)^\top
     (\comp^v - x_t^v) 
   > 30\gamma TV\sqrt{\ln(1/\delta)} +
  2V\left(\frac{K^2}{\gamma^2}+1\right)\ln(1/\delta) 
   \leq 2V\delta\log_2T\right).
 \end{align*}
  \label{lemma:gradconc}
\end{lemma}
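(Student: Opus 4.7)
The plan is to decompose the target quantity into a mean-zero martingale part and a deterministic variance-correction part, then apply Freedman's inequality to the former and a direct estimate to the latter. Concretely, write
\[
  \gh_{t,i}^v - g_{t,i}^v \;=\; (\gtil_{t,i}^v - g_{t,i}^v) \;+\; \beta_{t,i}^v,\qquad
  \beta_{t,i}^v := \frac{10\gamma}{K\,\diffb{t}{i}}\sqrt{\ln(1/\delta)},
\]
so that $(\gh_t^v - g_t^v)^\top(\comp^v - x_t^v) = Y_t^v + B_t^v$ with $Y_t^v := (\gtil_t^v - g_t^v)^\top(\comp^v - x_t^v)$ and $B_t^v := (\beta_t^v)^\top(\comp^v - x_t^v)$. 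By the unbiasedness lemma for $\gtil$, $\E_t Y_t^v = 0$, so $\{Y_t^v\}_{t\ge 1}$ is a martingale difference sequence for each fixed $v$, while $B_t^v$ is $\mathcal F_{t-1}$-measurable.

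For the deterministic term $B_t^v$, I would drop the non-negative $x_{t,i}^v$ to get $B_t^v \le \sum_i \beta_{t,i}^v u_i^v$; since $u^v$ is concentrated on a single venue and $1/\diffb{t}{i} \le K/(\gamma m) \le K/\gamma$, each summand is $O(\sqrt{\ln(1/\delta)})$, so $\sum_{t,v} B_t^v$ contributes the $TV\sqrt{\ln(1/\delta)}$ term in the bound. For the martingale term, bound $|Y_t^v| \le b$ with $b = O(K^2/\gamma^2)$ via the component bound $|\gtil_{t,i}^v| \le 1/\diffb{t}{i} \le K/\gamma$ together with $|u_i^v - x_{t,i}^v|\le 1$, and use Lemma~\ref{lemma:varbound} to control $\sigma^2 := \sum_{t=1}^T \var_t Y_t^v$. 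Freedman's inequality (Lemma~\ref{lemma:freedman}) then yields, with probability at least $1 - \delta\log_2 T$,
\[
  \sum_{t=1}^T Y_t^v \;\le\; 2\max\{2\sigma,\,b\sqrt{\ln(1/\delta)}\}\sqrt{\ln(1/\delta)}
  \;\le\; 4\sigma\sqrt{\ln(1/\delta)} + 2b\ln(1/\delta).
\]
The $2b\ln(1/\delta)$ piece is exactly the $(K^2/\gamma^2)\ln(1/\delta)$ contribution in the target bound.

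The crux is handling $4\sigma\sqrt{\ln(1/\delta)}$. By Lemma~\ref{lemma:varbound},
\[
  \sigma^2 \;\le\; 5\sum_{t=1}^T\sum_{i=1}^K \frac{u_i^v + (\play{t}{i}{v})^2}{\diffb{t}{i}},
\]
and the point of the correction term $\beta$ is that its coefficient $10\gamma\sqrt{\ln(1/\delta)}/K$ is tuned precisely so that, after the AM-GM step $4\sigma\sqrt{\ln(1/\delta)} \le \sigma^2\cdot(\gamma/K)\sqrt{\ln(1/\delta)} + 4(K/\gamma)\sqrt{\ln(1/\delta)}$, the $\sigma^2\cdot(\gamma/K)\sqrt{\ln(1/\delta)}$ contribution is majorized by a constant multiple of $\sum_t B_t^v$ already accounted for above. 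Finally, applying a union bound over $v \in \{1,\dots,V\}$ replaces the failure probability $\delta\log_2 T$ by $V\delta\log_2 T$, matching the $2V\delta\log_2 T$ in the lemma statement.

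The main obstacle will be the last bookkeeping step: verifying that the specific constant $10\gamma/K$ in the definition of $\gh$ is large enough for the variance-correction bias to dominate the $\sigma\sqrt{\ln(1/\delta)}$ term from Freedman, while still being small enough that $\sum_{t,v} B_t^v$ remains of order $TV\sqrt{\ln(1/\delta)}$. Once this balance is checked carefully, the rest is a direct combination of Lemma~\ref{lemma:varbound}, Lemma~\ref{lemma:freedman}, and a union bound.
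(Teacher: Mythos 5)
Your route---split $\gh_t^v-g_t^v$ into the mean-zero part $\gtil_t^v-g_t^v$ and the deterministic bias $\beta_t^v$, apply Freedman's inequality (Lemma~\ref{lemma:freedman}) per unit $v$ with the conditional-variance bound of Lemma~\ref{lemma:varbound}, absorb the $\sigma\sqrt{\ln(1/\delta)}$ branch into the accumulated bias via AM--GM, and union bound over $v$---is the right argument, and it is in fact more consistent with the stated bound than the proof the paper writes down. The paper's own proof is two sentences: it defines the single martingale $X_t=\sum_{v=1}^V(\gtil_t^v-g_t^v)^\top(\comp^v-x_t^v)$, bounds $|X_t|\le V(K/\gamma+1)$ by H\"older, and invokes Hoeffding--Azuma. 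That cannot be the whole story: Hoeffding--Azuma yields failure probability $\delta$ rather than the $2V\delta\log_2 T$ in the statement (the $\log_2 T$ is the signature of Lemma~\ref{lemma:freedman}), and it would leave Lemma~\ref{lemma:varbound} and the bias correction in $\gh$ with no role. Your writeup executes what the surrounding text of the paper only gestures at, and your per-$v$ application with a union bound is what actually produces the factor $V$ in the failure probability. Two caveats. First, the clean bound on the martingale increments is $b=O(K/\gamma)$ (via $\|\gtil_t^v-g_t^v\|_\infty\le K/\gamma+1$ and $\|\comp^v-x_t^v\|_1\le 2$), not $O(K^2/\gamma^2)$; the $(K^2/\gamma^2)\ln(1/\delta)$ term is more naturally attributed to the AM--GM split of $4\sigma\sqrt{\ln(1/\delta)}$ with parameter $\gamma^2/K^2$ than to the $b\sqrt{\ln(1/\delta)}$ branch. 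Second, the bookkeeping you defer does not visibly reproduce the stated constants: since $\diffb{t}{i}\ge\gamma m/K$, one gets $\beta_{t,i}^v\le(10/m)\sqrt{\ln(1/\delta)}$, so $\sum_{t,v}B_t^v=O\bigl(TV\sqrt{\ln(1/\delta)}\bigr)$ with no factor of $\gamma$, and similarly for the variance term; the claimed leading term $30\gamma TV\sqrt{\ln(1/\delta)}$ is only recovered if $\gamma$ is bounded below by a constant. This appears to be a looseness in the lemma statement itself (the paper's proof verifies none of it), but you should flag it explicitly rather than assert that the balance ``is checked carefully.''
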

\begin{proof}
  We define the martingale $X_t = \sum_{v=1}^V(\gtil_t^v -
  g_t^v)^\top(\comp^v - x_t^v)$. Then $|X_t| \leq
  V\left(\frac{K}{\gamma}+1\right)$ by H\"older's inequality. Applying
  Hoeffding-Azuma inequality gives the result.  
\end{proof}
Finally, we also need to show that the size of the gradient estimator
which is controlled in expectation is also bounded with high
probability.
\begin{lemma}
  \begin{align*}
    \P\left(\sum_{t=1}^T\sum_{i=1}^K\alloc{i}{t}
    (\gh_{t,i}^{\vol{t}})^2 
  > 
  2\left(\frac{K^2}{\gamma^2}V+8\ln\frac{1}{\delta}\right)
  \sqrt{2T\ln(1/\delta)}\right) 
  \leq \delta.
  \end{align*}
  \label{lemma:gradbound}
\end{lemma}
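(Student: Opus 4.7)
The plan is to apply the Hoeffding-Azuma inequality (Lemma~\ref{lemma:haineq}) to the random quantity $Y_t = \sum_{i=1}^K \alloc{i}{t}(\gh_{t,i}^{\vol{t}})^2$. Specifically, I would form the martingale difference sequence $X_t = Y_t - \E_t Y_t$, establish a uniform almost-sure bound $|X_t| \leq c$ where $c = \frac{K^2 V}{\gamma^2} + 8\ln(1/\delta)$, and then apply the concentration inequality. Combining the resulting deviation estimate with a direct bound on $\sum_t \E_t Y_t$ of the same order produces the factor of $2$ on the right hand side of the claim.

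To obtain the pointwise bound, I would expand the square using $(a+b)^2 \leq 2a^2 + 2b^2$ in the definition (Equation~\ref{eqn:gradestprob}), giving
\[
(\gh_{t,i}^{\vol{t}})^2 \leq 2(\gtil_{t,i}^{\vol{t}})^2 + \frac{200\gamma^2}{K^2(\diffb{t}{i})^2}\ln\frac{1}{\delta}.
\]
The importance-weighted estimator $\gtil_{t,i}^{\vol{t}}$ is zero unless $\alloci{i}{t} = \lceil\alloc{i}{t}\rceil$, in which case it equals $\ind(\limit{i}{t} \geq \alloc{i}{t})/\diffb{t}{i}$, whose magnitude is at most $1/\diffb{t}{i} \leq K/\gamma$ by the uniform floor $\diffb{t}{i} \geq \gamma/K$ enforced by the mixing. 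Together with $\sum_i \alloc{i}{t} \leq V$, this gives $\sum_i \alloc{i}{t}(\gtil_{t,i}^{\vol{t}})^2 \leq V K^2/\gamma^2$, while the deterministic correction piece contributes at most a constant multiple of $V\ln(1/\delta)$ after summing against $\alloc{i}{t}$. Adjusting absolute constants yields $Y_t \leq c$ almost surely, and hence $|X_t| \leq c$.

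With the pointwise bound in hand, Lemma~\ref{lemma:haineq} applied to $X_t$ gives $\sum_{t=1}^T X_t \leq c\sqrt{2T\ln(1/\delta)}$ with probability at least $1-\delta$. The same per-round calculation that bounds $Y_t$ also bounds $\E_t Y_t$ by a quantity of order $c$, and I would verify that $\sum_{t=1}^T \E_t Y_t$ stays within $c\sqrt{2T\ln(1/\delta)}$ in the regime where the claim is non-trivial (i.e., for the choice of $\gamma$ made in the algorithm). Adding the two contributions gives the claimed inequality.

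The main obstacle is the bookkeeping in the expansion of $(\gh)^2$: the cross term $2\gtil_{t,i}^{\vol{t}} \cdot \frac{10\gamma\sqrt{\ln(1/\delta)}}{K\diffb{t}{i}}$ must be absorbed via the $2a^2+2b^2$ trick, and the $K/\gamma$ factor from $1/\diffb{t}{i}$ must cancel precisely against the $\gamma/K$ scaling of the variance correction to land at the clean envelope $\frac{K^2 V}{\gamma^2} + 8\ln(1/\delta)$. Once these constants are tracked, the application of Lemma~\ref{lemma:haineq} is entirely routine, paralleling the treatment already used in Lemmas~\ref{lemma:regconc} and~\ref{lemma:gradconc}.
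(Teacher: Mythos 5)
Your proposal follows essentially the same route as the paper's (very terse) proof: center the per-round sum $\sum_i \alloc{i}{t}(\gh_{t,i}^{\vol{t}})^2$ into a martingale difference sequence, bound the envelope via $\diffb{t}{i} \geq \gamma/K$ and $\sum_i \alloc{i}{t} \leq V$, and apply Hoeffding--Azuma together with a matching bound on the sum of conditional expectations. You are in fact somewhat more careful than the paper, which writes the martingale in terms of $\gtil$ and silently drops the variance-correction term that your $(a+b)^2 \leq 2a^2+2b^2$ expansion handles explicitly; the residual discrepancies are only in absolute constants, which are equally loose in the paper's own statement.
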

\begin{proof}
  We define the martingale
  $X_t = \sum_{t=1}^T\sum_{i=1}^K\alloc{i}{t}((\gtil_{t,i}^{\vol{t}})^2 -
  \E_t\gtil_{t,i}^{\vol{t}})^2$. Then using the bound on $\gtil_t$,
  and the bound on expectation from proof of
  Theorem~\ref{thm:regretintexp}, 
  $X_t \leq 2\frac{K^2}{\gamma^2}V$. Application of Hoeffding-Azuma
  inequality gives the result. 
\end{proof}

We are now in a position to prove a high probability bound on the
regret of Algorithm~\ref{alg:intallocexp} when run with the gradient
estimator $\gh_t$ instead of $\gtil_t$.

\begin{theorem}
  With probability at least 1 - $\frac{1}{T}$, the regret of
  Algorithm~\ref{alg:intallocexp} using the gradient estimator $\gh_t$
  against oblivious adversaries is
  $\widetilde{O}\left(V(TK)^{2/3}\right)$.
  \label{thm:regretintprob}
\end{theorem}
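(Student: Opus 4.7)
The plan is to mirror the expected-regret proof of Theorem~\ref{thm:regretintexp}, replacing every in-expectation step with a high-probability bound by invoking Lemmas~\ref{lemma:regconc}, \ref{lemma:gradconc} and \ref{lemma:gradbound}, and closing with a union bound. Specifically, I would decompose $R_T \leq (A) + (B)$, where
\[
(A) = \sum_{t,i}\bigl[\min(\alloc{i}{t},\limit{i}{t}) - \min(\alloci{i}{t},\limit{i}{t})\bigr]
\]
captures the cost of rounding fractional to integral allocations, and $(B)$ is the continuous-valued regret against $\comp$ at the played $\alloc{i}{t}$. Lemma~\ref{lemma:regconc} bounds $(A)$ by $V\sqrt{T\ln(K/\delta)} + \gamma T/K$ with probability at least $1-\delta$. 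By concavity of $\min(\cdot,\limit{i}{t})$, just as in the proof of Theorem~\ref{thm:regretintexp}, one bounds $(B) \leq \sum_{t,v}(\comp^v - x_t^v)^\top g_t^v$; I would then add and subtract the corrected estimator to split this into a main term $\sum_{t,v}(\comp^v - x_t^v)^\top \gh_t^v$ and a deviation term $\sum_{t,v}(\comp^v - x_t^v)^\top (g_t^v - \gh_t^v)$.

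The deviation term is exactly what Lemma~\ref{lemma:gradconc} controls, contributing an additive $\widetilde O(\gamma TV + VK^2/\gamma^2)$ with probability $1 - 2V\delta\log_2 T$. For the main term I would rerun the EG analysis from the proof of Theorem~\ref{thm:regretfrac} verbatim with $\gh_t$ in place of $g_t$; this is valid so long as $\eta\,\gh_{t,i}^v \leq 1$, which holds at the eventual parameter choices since $\diffb{t}{i} \geq \gamma/K$. The EG bound then reads
\[
\sum_{t,v}(\comp^v - x_t^v)^\top \gh_t^v \;\leq\; \frac{V\ln K}{\eta} + \eta\sum_{t,v,i}\play{t}{i}{v}(\gh_{t,i}^v)^2.
\]
The quadratic term splits into the ranges $v \leq \intvol{t}{i}$ and $v > \intvol{t}{i}$, exactly as in the proof of Theorem~\ref{thm:regretintexp}. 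The first range is bounded deterministically by $O(TVK/\gamma)$ (the extra additive shift in $\gh$ is of the same order of magnitude as the corresponding indicator-based term already handled in the proof of Theorem~\ref{thm:regretintexp}). The second range equals $\sum_{t,i}\diff{i}{t}(\gh_{t,i}^{\vol{t}})^2 \leq \sum_{t,i}\alloc{i}{t}(\gh_{t,i}^{\vol{t}})^2$, which Lemma~\ref{lemma:gradbound} bounds by $\widetilde O(K^2 V\sqrt{T}/\gamma^2)$ with probability at least $1-\delta$.

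Collecting the pieces, the overall high-probability regret bound is of order
\[
\frac{V\ln K}{\eta} + \frac{\eta TVK}{\gamma} + \gamma TK + \frac{\eta K^2 V\sqrt{T}}{\gamma^2},
\]
plus lower-order additive contributions from Lemmas~\ref{lemma:regconc} and~\ref{lemma:gradconc}. Setting $\delta = 1/(c T V \log_2 T)$ for an appropriate constant $c$ makes a union bound over the three lemmas produce total failure probability at most $1/T$. Optimising $\eta$ and $\gamma$ at essentially the same scalings as in the expected-regret proof of Theorem~\ref{thm:regretintexp} balances the first three summands at $\widetilde O(V(TK)^{2/3})$, while the fourth summand remains of strictly smaller order under this choice. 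The main obstacle is the variance-correction balance: the $10\sqrt{\ln(1/\delta)}$ constant inside~(\ref{eqn:gradestprob}) must be large enough for Freedman's inequality (Lemma~\ref{lemma:freedman}) inside the proof of Lemma~\ref{lemma:gradconc} to yield a one-sided deviation of the right $T^{2/3}$ rate, yet small enough that the positive bias it injects into $\gh$ neither inflates the EG quadratic term beyond $\widetilde O(T^{2/3})$ nor violates the $\eta\,\gh \leq 1$ requirement at the final parameter choices. Carrying these three conditions simultaneously through the bookkeeping is the step most in need of care.
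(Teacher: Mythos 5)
Your proposal follows essentially the same route as the paper's proof: the same decomposition into the rounding cost (Lemma~\ref{lemma:regconc}), the deviation term $(g_t^v - \gh_t^v)^\top(\comp^v - x_t^v)$ (Lemma~\ref{lemma:gradconc}), the EG analysis rerun on $\gh_t$, and the quadratic term controlled via Lemma~\ref{lemma:gradbound}, closed by a union bound with $\delta$ scaled to give failure probability $1/T$. Your write-up is in fact more explicit than the paper's terse argument about the bookkeeping conditions (the $\eta\,\gh_{t,i}^v \le 1$ requirement and the role of the variance-correction constant), but the underlying proof is the same.
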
 

The proof essentially involves putting the lemmas together, along with
the full information analysis of the quantity $(\comp_u^v -
x_t^v)^\top\gh_t^v$.
\begin{proof}
  Using Lemma~\ref{lemma:regconc}, with probability at least
  1-$\delta/3$ 
  \begin{align*}
    &R_T =
    \sum_{t=1}^T\sum_{i=1}^K\min(\sum_{v=1}^{\vol{t}}\comp_i^v,\limit{i}{t})
    - \min(\alloci{i}{t},\limit{i}{t})\\
    &\leq \sum_{t=1}^T\sum_{i=1}^K
    \min(\sum_{v=1}^{\vol{t}}\comp_i^v,\limit{i}{t}) 
    - \min(\alloc{i}{t},\limit{i}{t}) +
    \sqrt{2T\ln\frac{3K}{\delta}} + \gamma T\\ 
    &\leq \sum_{t=1}^T\sum_{v=1}^{\vol{t}}(\comp^v - x_t^v)^\top g_t^v
    + \gamma T + \sqrt{2T\ln\frac{3K}{\delta}}.
  \end{align*}
  Invoking Lemma~\ref{lemma:gradconc}, with probability at least
  1-$2\delta/3$, 
  \begin{align*}
    R_T &\leq \sum_{t=1}^T\sum_{v=1}^{\vol{t}}(\comp^v - x_t^v)^\top
    \gtil_t^v + \gamma T + \sqrt{2T\ln\frac{3K}{\delta}} \\&+
    2V\left(\frac{K^2}{\gamma^2}+1\right)\sqrt{2T\ln(3/\delta)} +
    30\gamma TV\sqrt{\ln(1/\delta)}.
  \end{align*}

  Once again we note that we are doing exponentiated gradient descent
  on $\gh_t$ so that we get from proof of Theorem~\ref{thm:regretfrac}
  $$\sum_{t=1}^T\sum_{v=1}^{\vol{t}}(\comp^v - x_t^v)^\top \leq
  \frac{1}{\eta} V\ln K +
  \eta\E\sum_{t=1}^T\sum_{i=1}^K\sum_{v=1}^V\play{t}{i}{v}(\gtil_{t,i}^v)^2
  . $$ 

  Using Lemma 7 and setting $\delta =
  \frac{1}{T}$ gives the statement of the theorem on optimizing for
  $\gamma,\eta$. 
\end{proof}

Note that our regret analysis so far has been against a fixed
comparator. When the adversary adapts to player sequence, the
comparator is random as well and depends on player's moves. However,
the comparator consists of delta vectors for every unit
$v$. Hence, there are a total of $K^V$ possible comparators. Hence, we
can take a union bound over all the comparators as well, and this
increases our regret bound by a factor of $V\ln K$ at most. This gives
us the following corollary.

\begin{corollary}
  With probability at least 1 - $\frac{1}{T}$, the regret of
  Algorithm~\ref{alg:intallocexp} against adaptive adversaries is
  $\widetilde{O}\left(V^2(TK)^{2/3}\right)$.
\end{corollary}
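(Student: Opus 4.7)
The plan is to reduce the adaptive-adversary case to the oblivious, fixed-comparator case of Theorem~\ref{thm:regretintprob} via a union bound over the finite set of extreme comparators. As noted in Section~\ref{sec:cont} and in the definition of $R_T$ in Section~2, the best fixed comparator in $\Delta_K^V$ can always be taken to be a vertex $\mathrm{opt}\in\{1,\ldots,K\}^V$, of which there are exactly $K^V$. Against an adaptive adversary the worst-case comparator becomes a random variable (it depends on the algorithm's realized moves), but it still takes values in this finite set almost surely, so a union bound over fixed comparators will suffice.

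First I would rerun the proof of Theorem~\ref{thm:regretintprob} with $\delta$ replaced by $\delta/K^V$ throughout, for an arbitrary but fixed $u\in\{1,\ldots,K\}^V$. Each martingale concentration step (Lemma~\ref{lemma:regconc}, Lemma~\ref{lemma:gradconc}, Lemma~\ref{lemma:gradbound}) and its application inside the proof of Theorem~\ref{thm:regretintprob} goes through unchanged, except that every appearance of $\ln(1/\delta)$ is replaced by $\ln(K^V/\delta) = V\ln K + \ln(1/\delta)$. I would then take the union bound over the $K^V$ vertex comparators, yielding a single guarantee that holds simultaneously for every such $u$ with probability at least $1-\delta$, and hence applies to the adaptive adversary's (random) worst-case comparator as well.

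Finally, I would reoptimize $\eta$ and $\gamma$ in light of the inflated logarithmic factors. The dominant term $\tfrac{V\ln K}{\eta}$ in the proof of Theorem~\ref{thm:regretintprob} is replaced by $\tfrac{V(V\ln K + \ln(1/\delta))}{\eta}$, and the concentration terms that scaled with $\sqrt{\ln(1/\delta)}$ or $\ln(1/\delta)$ pick up analogous $V$-dependence. After re-balancing, the whole regret is inflated by a factor of at most $V\ln K$, as anticipated in the remark preceding the corollary. Setting $\delta = 1/T$ at the end produces the claimed $\widetilde{O}(V^2(TK)^{2/3})$ rate with probability at least $1 - 1/T$. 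The main subtlety --- and what I would expect to be the only real obstacle --- is legitimizing the union bound against a \emph{data-dependent} comparator. This works because the random worst-case comparator is a $\{1,\ldots,K\}^V$-valued random variable, so the failure event is always contained in the union of the $K^V$ deterministic failure events for the individual fixed comparators; no further measurability argument is required beyond the finiteness of the vertex set.
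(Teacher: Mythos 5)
Your proposal matches the paper's own argument: the paper proves this corollary precisely by noting that the worst-case comparator, though random under an adaptive adversary, always lies among the $K^V$ vertex comparators, taking a union bound over them, and absorbing the resulting $V\ln K$ inflation into the $\widetilde{O}$ bound. Your additional care about replacing $\delta$ with $\delta/K^V$ in each concentration lemma and about the containment of the data-dependent failure event in the union of deterministic ones is a correct and slightly more explicit rendering of the same reasoning.
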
 

\noindent\textbf{Comparison with results of \citet{gknv2009darkpools}:}
We note that although our results are  
in the adversarial setup, the same results also apply to iid
problems. In particular, using online-to-batch conversion
techniques~\citep{cbcg2001generalization}, we can show that, after $T$
rounds, with high probability the allocations of our algorithm on each
round is within
$\widetilde{O}(V^2T^{-1/3}K^{2/3})$ of the optimal allocation.
This is a significant
improvement on the result of \citet{gknv2009darkpools}: it is straightforward to
check that the proof they provide gives a corresponding upper bound
no better than $O(T^{-1/4})$. As we shall see, the
generalization to adversarial setups leads to improved
performance in simulations.

\subsection{Lower bound on regret for integral allocations}

As mentioned in the previous section, the problem of $K$-armed bandits
is a special case of the dark pools problem with integral
allocations. Hence, we would like to leverage the proof techniques
from existing lower bounds on the optimal regret in the $K$-armed
bandits problem. As before we consider a special case with $V_t = V$
at every round. Following \citet{auer2003exp3}, we
construct $K$ different distributions for generating the liquidities
$\limit{i}{t}$. At each round, the $i_{th}$ distribution samples
$\limit{i}{t} = V$ with probability $\left(\half + \eps\right)$ and
$\limit{j}{i} = V$ with probability $\half$ for $j \ne i$. We now
mimic the proof of Theorem~5.1 in \citet{auer2003exp3}.

We
start with a lemma analogous to Lemma A.1 of Auer et
al~\citep{auer2003exp3}. Let $V_i = \sum_t\alloc{i}{t}$. Let $\E_i$ and
$\E_{\unif}$ denote expectations wrt the $i_{th}$ distribution and
uniform reward distribution respectively.
\begin{lemma}
  Let $f$ be a function of the reward sequence $\rew$ taking values in
  $[0,M]$. Then 
  \begin{small}
  $$\E_i f(\rew) \leq \E_{\unif}f(\rew) +
  M\sqrt{2\E_{\unif}[V_i]\ln\left(\frac{1}{1-4\eps^2}\right)}.$$ 
  \end{small}
  \label{lemma:allocbound}
\end{lemma}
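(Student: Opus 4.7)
The plan is to bound the difference $\E_i f(\rew) - \E_{\unif} f(\rew)$ by $M$ times the total variation distance between the two induced measures on reward sequences, apply Pinsker's inequality to reduce to a KL divergence, and then telescope the KL over rounds via the chain rule, using the fact that the two distributions differ only at venue $i$ and are distinguishable only when venue $i$ is actually queried.

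First I would invoke the standard bound $\E_i f(\rew) - \E_{\unif} f(\rew) \leq M\,\|\P_i - \P_{\unif}\|_{\mathrm{TV}}$, valid for any $[0,M]$-valued $f$, followed by Pinsker's inequality $\|\P_i - \P_{\unif}\|_{\mathrm{TV}} \leq \sqrt{\tfrac{1}{2}\KL(\P_{\unif}\,\|\,\P_i)}$. The work is then in bounding the KL. By the chain rule, $\KL(\P_{\unif}\,\|\,\P_i) = \sum_{t=1}^T \E_{\unif}\bigl[\KL(\P_{\unif}(\cdot\mid H_{t-1})\,\|\,\P_i(\cdot\mid H_{t-1}))\bigr]$, where $H_{t-1}$ denotes the observed history up to round $t-1$. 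Conditional on $H_{t-1}$, the algorithm's allocations $\alloci{j}{t}$ are determined up to internal randomization; the two measures agree on the marginals of $\limit{j}{t}$ for $j\neq i$ and differ only in the Bernoulli parameter of $\limit{i}{t}$, which is $\half$ under $\P_{\unif}$ and $\half+\eps$ under $\P_i$. Because the learner only sees $\min(\alloci{i}{t},\limit{i}{t})$, no information about $\limit{i}{t}$ is revealed unless $\alloci{i}{t}\geq 1$, so the per-round conditional KL equals $\ind(\alloci{i}{t}\geq 1)\cdot \KL(\mathrm{Bern}(\half)\,\|\,\mathrm{Bern}(\half+\eps))$, and a routine calculation shows that this latter KL is $\tfrac{1}{2}\ln\tfrac{1}{1-4\eps^2}$.

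Finally, since $\alloci{i}{t}$ is a non-negative integer, $\ind(\alloci{i}{t}\geq 1)\leq \alloci{i}{t}$, so summing and recalling $V_i = \sum_t \alloci{i}{t}$ yields $\KL(\P_{\unif}\,\|\,\P_i) \leq \tfrac{1}{2}\E_{\unif}[V_i]\ln\tfrac{1}{1-4\eps^2}$. Substituting into Pinsker and then into the total-variation bound produces the claimed inequality up to the stated constant. The main obstacle is carrying out the chain rule cleanly in the presence of the algorithm's randomization: one must argue that allocations to venues $j\neq i$ contribute nothing to the conditional KL (this follows from the identical marginals and the product structure of the liquidities across venues), and that the outer expectation with respect to $H_{t-1}$ is taken under $\P_{\unif}$, which is what allows $\E_{\unif}[V_i]$ rather than $\E_i[V_i]$ to appear in the final bound.
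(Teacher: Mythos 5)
Your proposal is correct and follows essentially the same route as the paper's proof: bound the difference by $M$ times the variation distance, apply Pinsker, expand the KL via the chain rule, note that only rounds where venue $i$ is actually allocated a unit contribute, and use $\ind(\alloci{i}{t}\geq 1)\leq \alloci{i}{t}$ (equivalently $\P_\unif(\alloc{i}{t}>0)\leq\E_\unif[\alloc{i}{t}]$) to arrive at $\E_\unif[V_i]$. Your observation that the per-round Bernoulli KL is $\tfrac{1}{2}\ln\tfrac{1}{1-4\eps^2}$ is in fact slightly sharper than the constant the paper carries, and your remark about the outer expectation being taken under $\P_\unif$ matches the paper's argument.
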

\begin{proof}
  It is clear from H\"older's inequality and Pinsker's inequality that 
  $$\E_i[f(r)] - \E_\unif[f(r)] \leq M\|\P_i - \P_\unif\|_1 \leq
  M\sqrt{2\KL(\P_\unif||\P_i)}.$$
  Now we can proceed as in the proof of \cite{auer2003exp3}
  \begin{align*}
    \KL(\P_\unif||\P_i) &=
    \sum_{t=1}^T\KL(\P_\unif(r_t|\rew_{t-1})||\P_i(r_t||\rew_{t-1})\\
    &= \sum_{t=1}^T\left[ \sum_{j=1,j\ne i}^K\P_\unif(\alloc{j}{t} >
      0)\KL\left(\half||\half\right) + \P_\unif(\alloc{i}{t} >
      0)\KL\left(\half + \eps||\half\right)\right]\\
    &= \sum_{t=1}^T\P_\unif(\alloc{i}{t} > 0)\KL\left(\half +
    \eps||\half\right).
  \end{align*}
  As $\alloc{i}{t}$ is integer valued, $\P_\unif(\alloc{i}{t} > 0)
  \leq \E_\unif[\alloc{i}{t}]$. Hence
  \begin{align*}
    \KL(\P_\unif||\P_i) &\leq \sum_{t=1}^T\E_\unif[\alloc{i}{t}]\ln
    \left(\frac{1}{1-4\eps^2}\right)\\
    &= \E_\unif[V_i]\ln\left(\frac{1}{1-4\eps^2}\right).
  \end{align*}
\end{proof}

Using this lemma, we can prove a lower bound on the regret of any
algorithm that plays integer valued allocations.
\begin{theorem}
  Any algorithm that plays integer valued
  allocations has expected regret that is $\Omega\left(\sqrt{TV(K+V\ln
    K)}\right).$ 
  \label{thm:regretlbexp}
\end{theorem}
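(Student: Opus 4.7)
My plan is to combine two lower bounds. The first is the $\Omega(V\sqrt{T\ln K})$ bound already established in Section~\ref{sec:lbcont}, which applies here since any integer-valued allocation strategy is in particular a valid continuous-valued one. The second is a new $\Omega(\sqrt{TVK})$ bound, obtained by adapting the classical $K$-armed bandit lower bound to the dark-pools setting using Lemma~\ref{lemma:allocbound}. Because $\max\{a,b\} \ge \sqrt{(a^2+b^2)/2}$, taking the worse of the two adversarial instances produces a lower bound of the form $\Omega(\sqrt{TV(K+V\ln K)})$, as required.

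For the $\Omega(\sqrt{TVK})$ part, I would work with the $K$ environments described just before the theorem (with $\vol{t} = V$ at every round and liquidities in $\{0,V\}$). Under environment $i$, the best fixed comparator places all $V$ units on venue $i$ and earns expected reward $V(\half+\eps)T$. Since the liquidities take values in $\{0,V\}$ and $\sum_j \alloc{j}{t} \le V$, the algorithm's expected instantaneous reward under environment $i$ is $\half\sum_j \alloc{j}{t} + \eps\,\alloc{i}{t} \le \half V + \eps\,\alloc{i}{t}$. Setting $V_i = \sum_t\alloc{i}{t}$, this yields $\E_i[R_T] \ge \eps(VT - \E_i[V_i])$. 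Applying Lemma~\ref{lemma:allocbound} with $f(\rew) = V_i$ and $M = VT$, then summing over $i$ and using the deterministic bound $\sum_i V_i \le VT$ together with Cauchy--Schwarz on $\sum_i\sqrt{\E_\unif[V_i]}$, yields
\[
\frac{1}{K}\sum_{i=1}^K\E_i[V_i] \le \frac{VT}{K} + \frac{VT}{\sqrt{K}}\sqrt{2VT\ln\frac{1}{1-4\eps^2}}.
\]
Substituting $\ln(1/(1-4\eps^2)) \le 8\eps^2$ for small $\eps$ and averaging $\E_i[R_T]$ over $i$ gives an average-regret lower bound of order $\eps VT - \eps^2 VT\sqrt{VT/K}$. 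Optimizing $\eps = \Theta(\sqrt{K/(VT)})$ yields the desired $\Omega(\sqrt{TVK})$ bound on the averaged, and hence worst-case, regret across the $K$ environments.

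The main obstacle will be getting the correct scaling in the application of Lemma~\ref{lemma:allocbound}: in the classical bandit setting the analogous quantity $V_i$ lies in $[0,T]$, whereas here $V_i \in [0,VT]$, so the $M$ factor becomes $VT$ and the ensuing optimization over $\eps$ balances differently, producing the $\sqrt{TVK}$ scaling rather than the naive $\sqrt{TK}$ one would expect from a direct reduction to $K$-armed bandits. Once this scaling is handled and the two regimes are identified, combining with the $\Omega(V\sqrt{T\ln K})$ bound inherited from Section~\ref{sec:lbcont} via the $\max$-to-$L^2$ inequality is routine.
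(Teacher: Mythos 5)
Your proposal is correct and follows essentially the same route as the paper: the same $K$ bandit-style environments with liquidities in $\{0,V\}$, the same reward calculation giving $\E_i[R_T]\ge\eps(VT-\E_i[V_i])$, the same application of Lemma~\ref{lemma:allocbound} with $M=VT$ followed by summing over $i$ and Jensen/Cauchy--Schwarz, the same optimization $\eps=\Theta(\sqrt{K/(VT)})$, and the same combination with the $\Omega(V\sqrt{T\ln K})$ bound from the continuous case. The only cosmetic differences are that you invoke Cauchy--Schwarz where the paper cites Jensen (the same inequality here) and you make the $\max$-to-$\sqrt{a^2+b^2}$ step explicit.
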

\begin{proof}
  The net reward of the algorithm when distribution $i$ is picked is
  given by 
  \begin{align*}
    &\E_i\sum_{t=1}^T\left[\sum_{j=1,j\ne i}^K\half\E_i\alloc{j}{t} +
      \left(\half + \eps\right)\E_i\alloc{i}{t}\right]\\
    &= \sum_{t=1}^T\left[\half(V - \E_i\alloc{i}{t}) + \left(\half +
      \eps\right) \E_i\alloc{i}{t}\right]\\
      &= \frac{TV}{2} + \eps\sum_{t=1}^T\E_i\alloc{i}{t} \\
    &= \frac{TV}{2} + \eps\E_i[V_i].
  \end{align*}
  As in the proof of Theorem~5.1 of \cite{auer2003exp3}, we now apply
  Lemma~\ref{lemma:allocbound} to the function $V_i$ of the reward
  sequence. As $V_i \in [0,TV]$, we get
  \begin{align*}
    \E_i[V_i] &\leq \E_\unif[V_i] +
    TV\sqrt{2\E_\unif[V_i]\ln\left(\frac{1}{1-4\eps^2}\right)}\\ 
    &\leq\E_\unif[V_i] + 2TV\eps\sqrt{\E_\unif[V_i]}.
  \end{align*}
  Then 
  $$\sum_{i-1}^K\E_i[V_i] \leq \sum_{i=1}^K\E_\unif[V_i] +
  2TV\eps\sum_{i=1}^K\sqrt{\E_\unif[V_i]}.$$
  Now $\sum_{i=1}^K\E_\unif[V_i] = TV$. Applying Jensen's inequality
  to the second term we get
  $$\sum_{i=1}^K\E_i[V_i] \leq TV + 2TV\eps\sqrt{KTV}.$$
  
  As the index $i$ was chosen uniformly at random, averaging over this
  choice gives an expected bound on the reward of
  $$\frac{1}{K}\sum_{i=1}^K\E_i[V_i] \leq \frac{TV}{K} +
  2TV\eps\sqrt{\frac{TV}{K}}.$$ 
  Noting again that the reward of optimal comparator is still
  $\left(\half+\eps \right)TV$, we get that the expected regret is
  $$\Omega\left(\eps\left(TV - \frac{TV}{K} +
  2TV\eps\sqrt{\frac{TV}{K}}\right)\right).$$ 
  Setting $\eps$ optimally to $c\sqrt{\frac{K}{TV}}$ gives an
  $\Omega(\sqrt{TVK})$ lower bound.
  We also note that the lower bound
  of $\Omega(V\sqrt{T\ln K})$ shown for continuous-valued allocations
  applies to the integer-valued case as well. Combining the
  two, we get that the regret is
  \begin{small}
  \begin{align*}
    \Omega(\max\{\sqrt{TVK},V\sqrt{T\ln K}\}) 
    = \Omega\left(\sqrt{T}\left(\sqrt{VK} + V\sqrt{\ln
    K}\right)\right).
  \end{align*}
\end{small}
\end{proof}

There is a gap between our lower and upper bounds in this case.
We do not know which bound is loose.

\subsection{Efficient sampling for integral allocations} 
\label{sec:greedy}
All that remains to specify in Algorithm~\ref{alg:intallocexp} is the
construction of the distribution $p$ over subsets at every
round. Since we don't know what the distribution is, we cannot sample
from it easily it would seem. If $K$ is small, one can use
non-negative least squares to find the distribution that has the
given marginals. However, once the number of venues $K$ is large, $p$
is a distribution over $\binom{K}{m}$ subsets, for which the least
squares solver might be too slow. One way around is to use the idea of
greedy approximations in Hilbert Spaces from the classic paper of
\citep{jones92greedy}. We can greedily construct a distribution on
subsets which matches the marginals on every element approximately in
an efficient manner. Exact sampling from the distribution without ever
constructing it explicitly is also possible. The explicit algorithms
giving the implementations can be found in the full version of the
paper.

\section{Experimental results}
\label{sec:expt}

We compared four methods experimentally. We refer to
Algorithms~\ref{alg:fracalloc} and~\ref{alg:intallocexp} as \expgrad~ 
and \expthree~ respectively. We also run the Optimistic Kaplan Meier
estimator based algorithm of \citep{gknv2009darkpools},
which is called \optkm. Finally we implemented the parametric maximum
likelihood estimation-allocation based algorithm described in
\citep{gknv2009darkpools} as well, which we call \parml. As we did not
have access to real dark pool data, we decided to implement a data
simulator similar to \citep{gknv2009darkpools}. We used a combination
of a \emph{Zero Bin} parameter and power law distribution to generate
the $\limit{i}{t}$'s while the sequence $\vol{t}$ was kept
fixed. Parameters for the Zero Bin and power law were set to lie in the
same regimes as the ones observed in the real data of
\citep{gknv2009darkpools}.

We started by generating the data from the parametric model of
\citep{gknv2009darkpools}. We used 48 venues, $T=2000$ to match the
experiments of \citep{gknv2009darkpools}. The values of
$\limit{t}{i}$'s were sampled iid from Zero Bin+Power law
distributions with appropriately chosen parameters. A plot of the
resulting cumulative rewards averaged over 100 trial runs can be seen
in Figure~\ref{fig:cumulative48venue}.
\begin{figure}
  \centering
  \includegraphics[scale=.30]{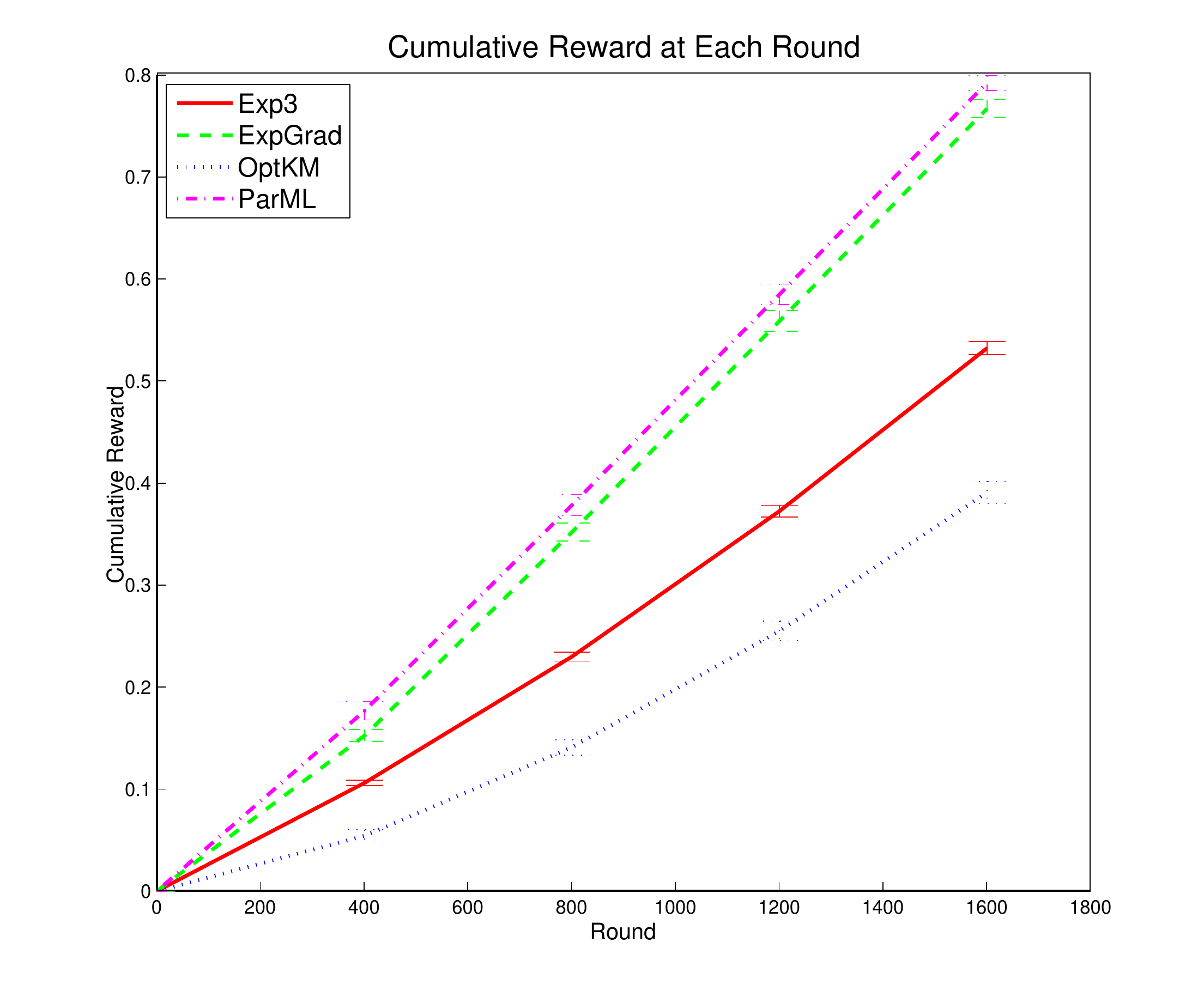} 
  \caption{Cumulative rewards for each algorithm as a function of the
    number of rounds when run on the parametric model of
    \citep{gknv2009darkpools} averaged over 100 trials} 
  \label{fig:cumulative48venue}
\end{figure}

We see that \parml~ has a slightly superior performance on this data,
understandably as the data is being generated from the specific
parametric model that the algorithm is designed for. However,
\expgrad~ gets net allocations quite close to \parml. Furthermore,
both \expthree~ and \expgrad~ are far superior to the performance \optkm~
which is our true competitor in some sense being a non-parametric
approach just like ours. 


Next, we study the performance of all four algorithms under a variety
of adversarial scenarios. We start with a simple setup of two
venues. The parameters of the power law initially favor Venue 1 for 12500
rounds, and then we switch the power law parameters to favor Venue
2. We study both the cumulative rewards as well as the allocations to
both venues for each algorithm. Clearly an algorithm will be more
robust to adversarial perturbations if it can detect this change
quickly and switch its allocations accordingly. We show the results of
this experiment in Figure~\ref{fig:2venue}.
\begin{figure}
  \centering
  \begin{tabular}{cc}
      \includegraphics[width=.225\textwidth]{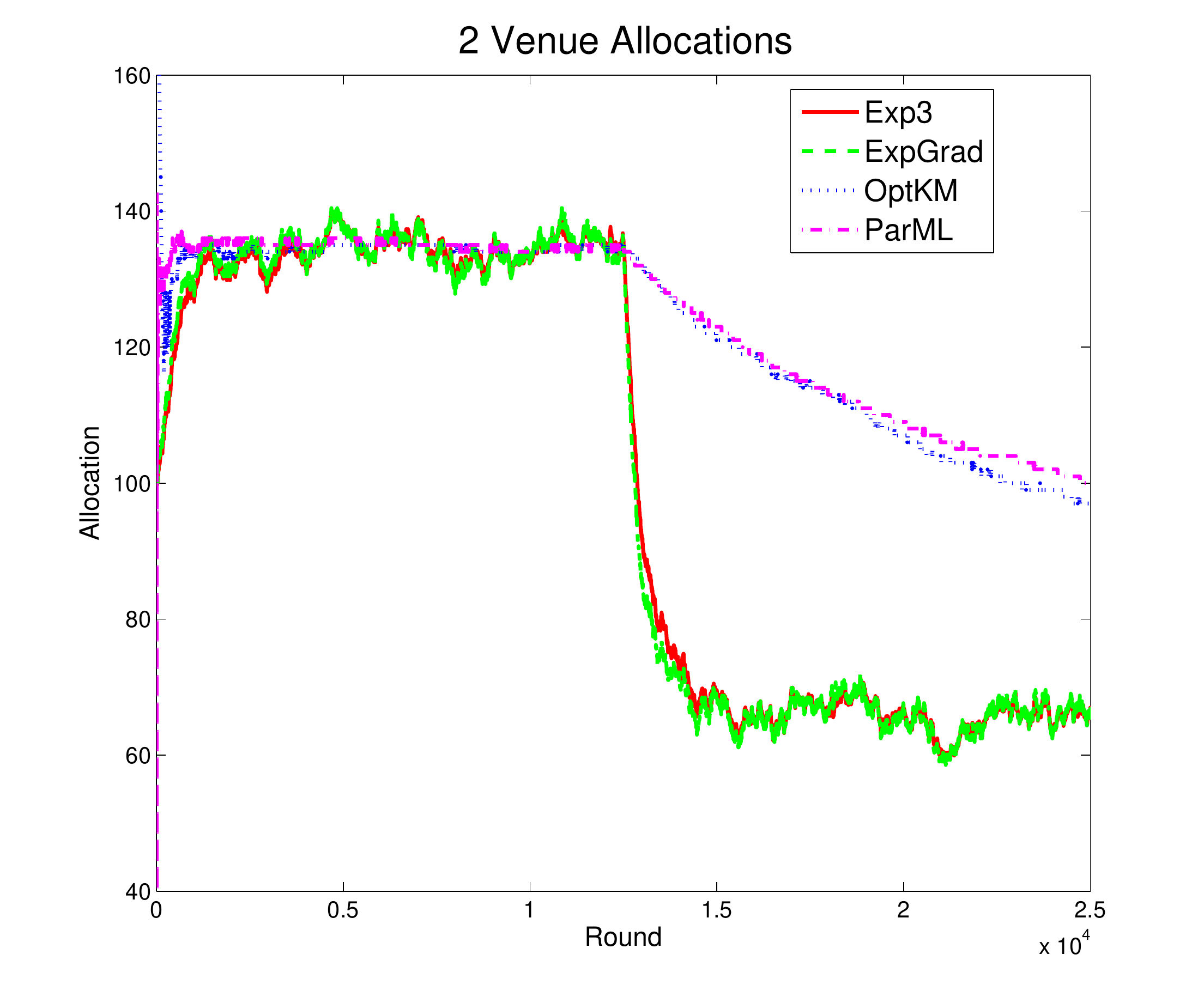}
    &
      \includegraphics[width=.225\textwidth,height=90pt]{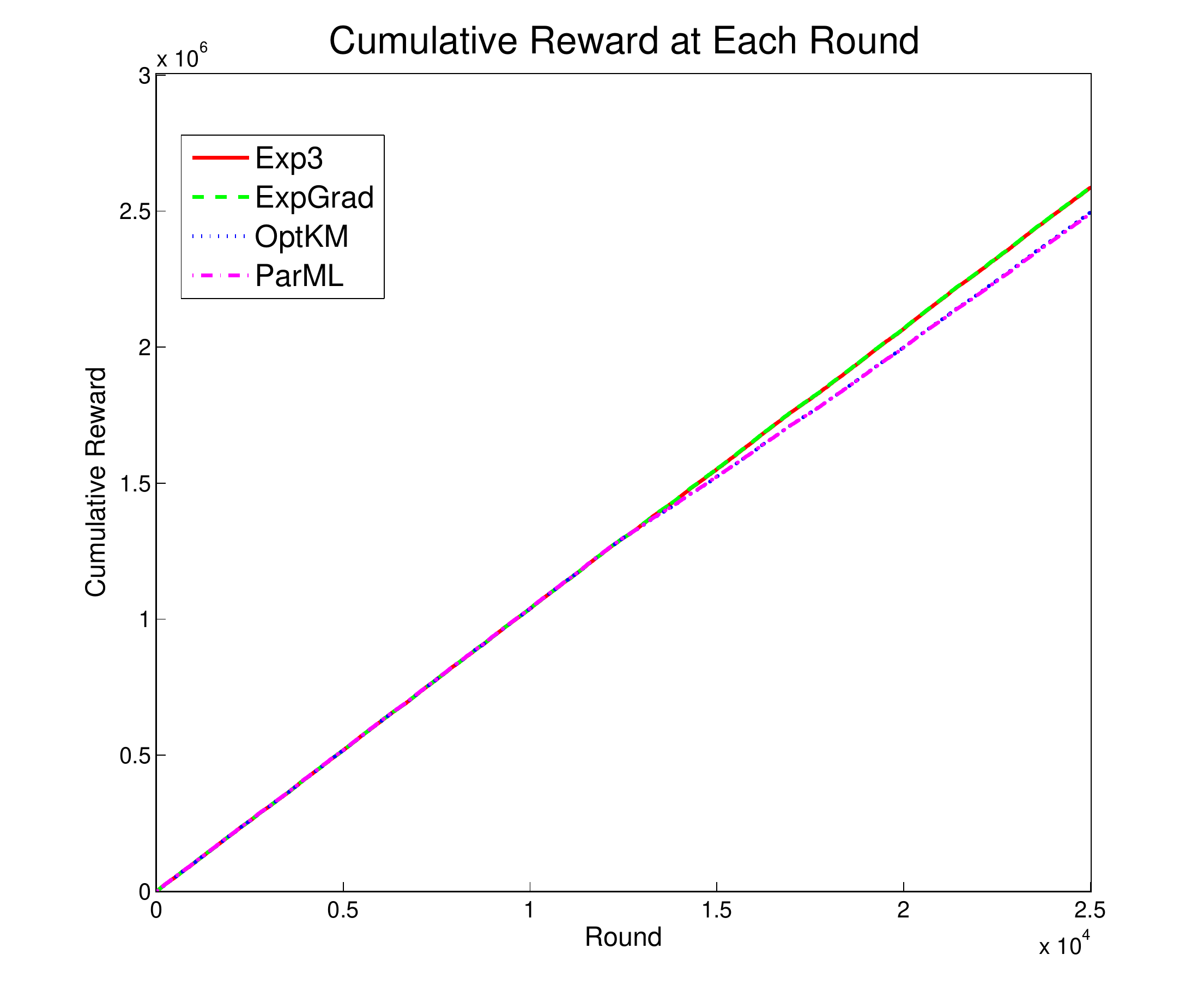}\\
      (a)&(b)
  \end{tabular}
  \caption{Allocations to the 2 venues and cumulative rewards for the
    different algorithms. Note the inability of \parml~and \optkm~to
    effectively switch between venues when distributions
    switch. \expgrad~ and \expthree~also achieve higher cumulative
    rewards. } 
  \label{fig:2venue}
\end{figure}

Because of just 2 venues, rounding has a rather negligible effect in
this case and both our methodshave an almost identical performance. Our
algorithms \expgrad~ and \expthree~ switch much faster to the new 
optimal venue when distributions switch. Consequently, the cumulative
reward of both our algorithms also turns out significantly higher as
shown in Figure~\ref{fig:2venue}(b).

We wanted to investigate how this behavior changes when the switching involves
a larger number of venues. We created another experiment where there
are 5 venues, maximum volume $V=200$. Venues 1 and 5 oscillate between
getting very favorable and unfavorable $\beta$ values ($\beta$ is the power
law exponent). Other venues also switch, but between less extreme
values. Allocations to all 5 venues for each algorithm are shown in
Figure~\ref{fig:5venue}. 
\begin{figure}
  \centering
  \begin{tabular}{cc}
      \includegraphics[width=.225\textwidth]{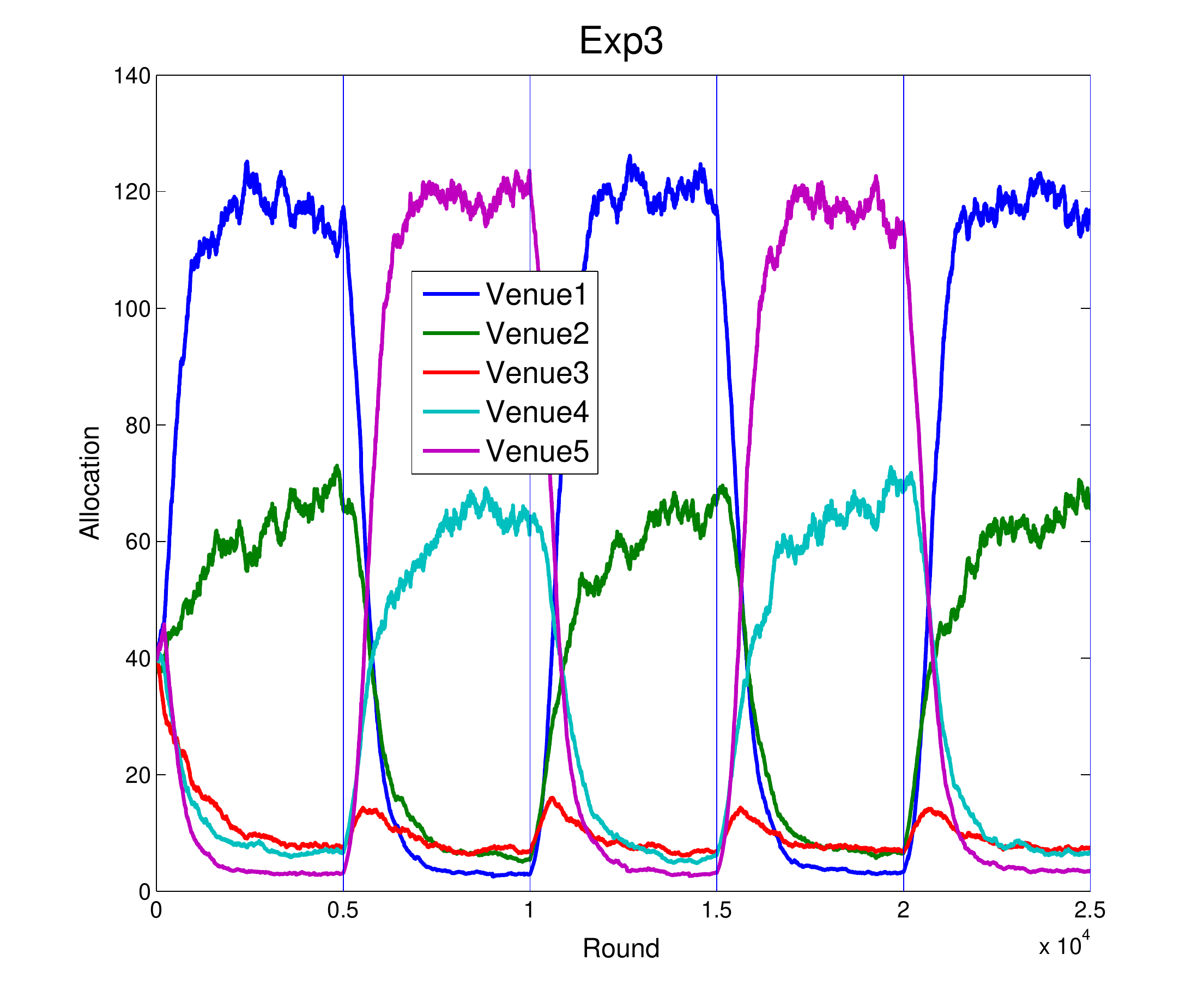}
    &
      \includegraphics[width=.225\textwidth]{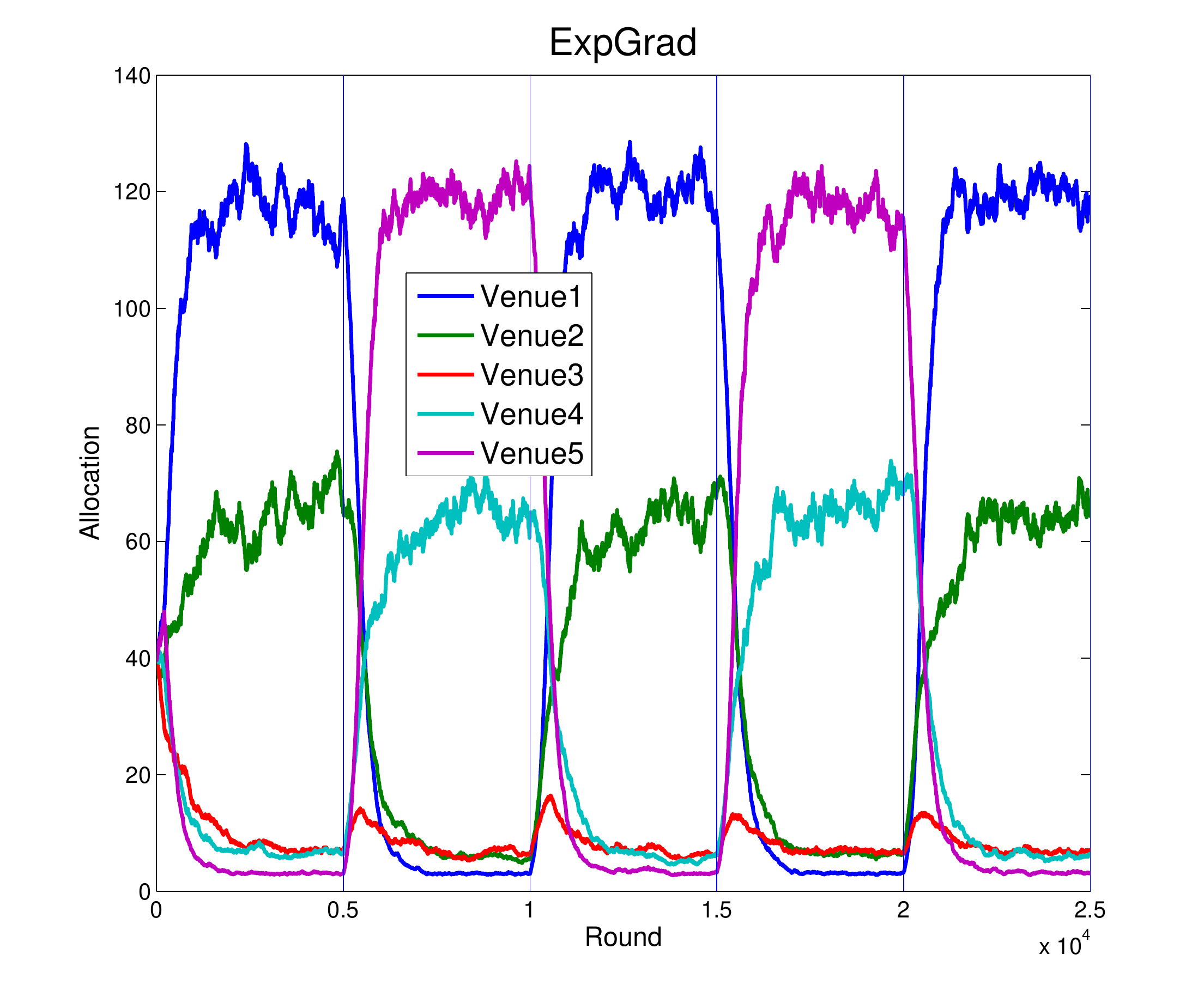}\\
      (a)&(b)\\
      \includegraphics[width=.225\textwidth]{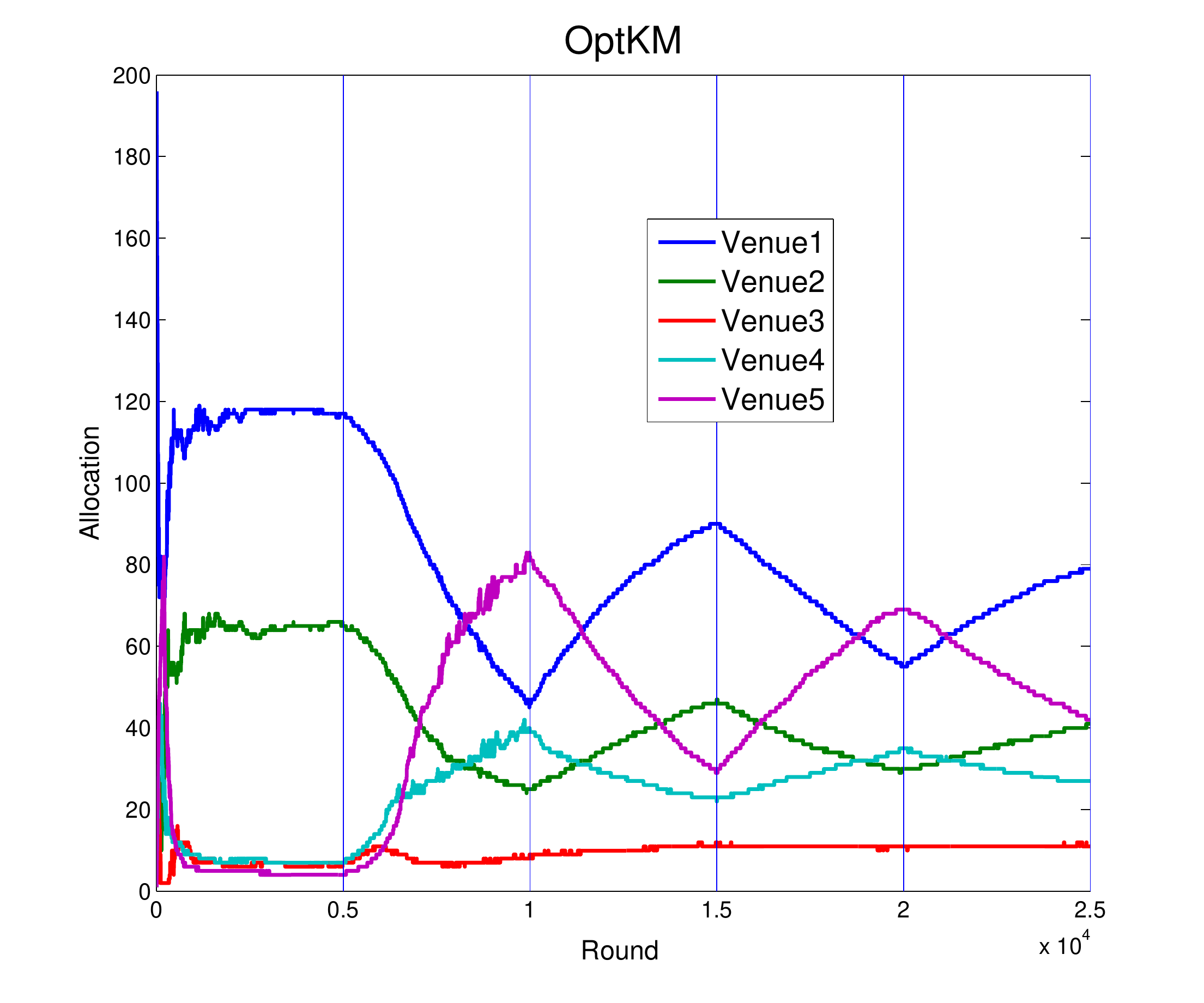}
    &
      \includegraphics[width=.225\textwidth]{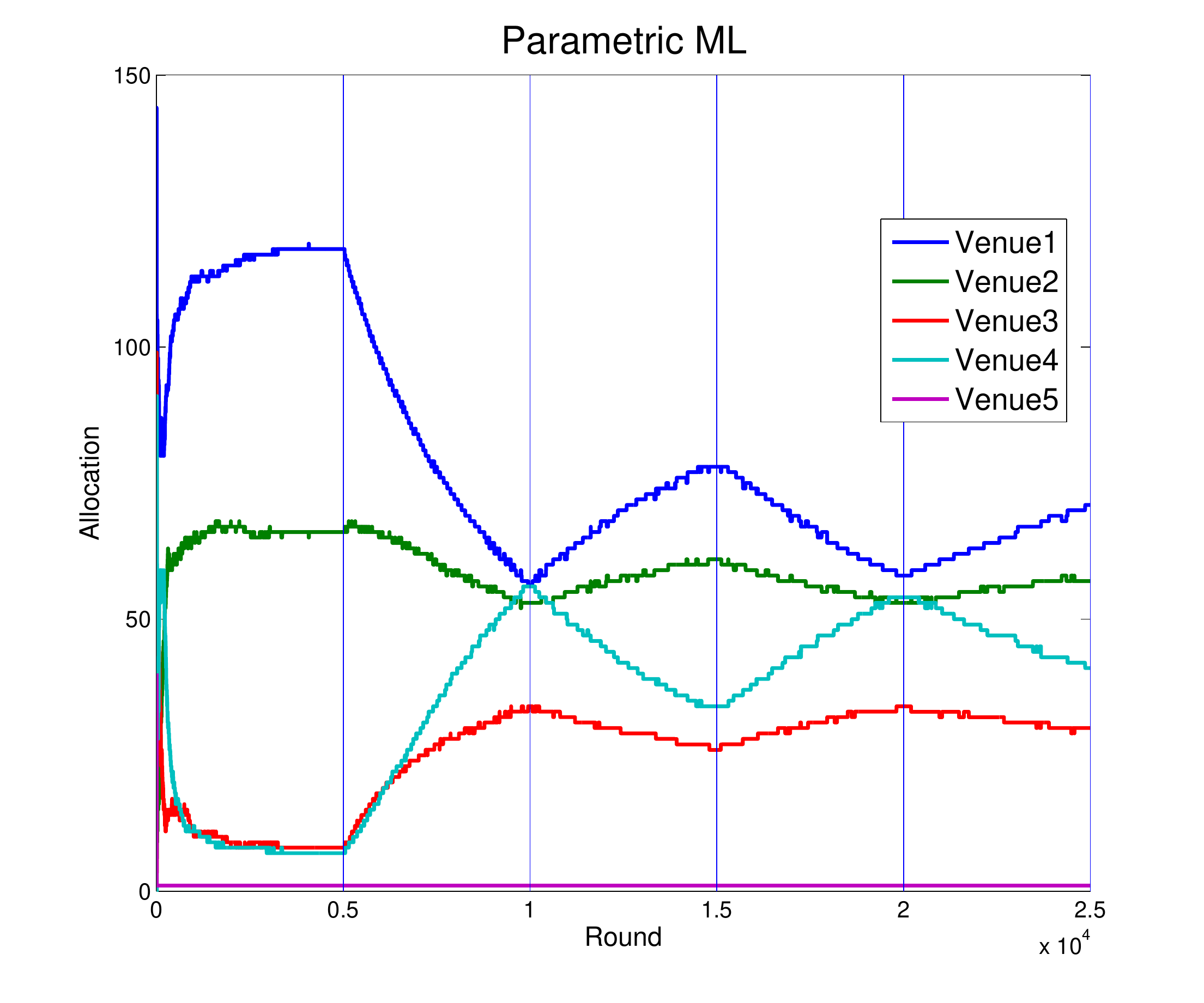}\\
      (c)&(d)
  \end{tabular}
  \caption{Allocations to the 5 venues for the
    different algorithms. Note the poor switching of \optkm~between
    venues when distributions switch. \parml~completely fails
    on this problem. \expthree~and \expgrad~correctly identify both long
    and short range trends (see text).} 
  \label{fig:5venue}
\end{figure}

Once again both \expthree~and \expgrad~identify both the long
range trend (favorability of venues 1, 5 over the others) and short range
trend (favoring venue 1 over 5 in certain phases). There is a gap
between \expthree and \expgrad~this time, however, as rounding does
start to play a role with 5 venues. \optkm~adapts
somewhat, although it still doesn't reach as high an allocation level
as \expthree~after switching to a new venue. \parml~completely fails
to identify this switching. We also studied the behavior of algorithms as $V$ is
scaled on the same problem. Figure~\ref{fig:5venuecum} plots the
cumulative reward of each algorithm for $V=200$ and $V=400$. It is
clear that \expgrad~and \expthree~still comprehensively outperform
others. 

\begin{figure}
  \centering
  \begin{tabular}{cc}
      \includegraphics[width=.25\textwidth]{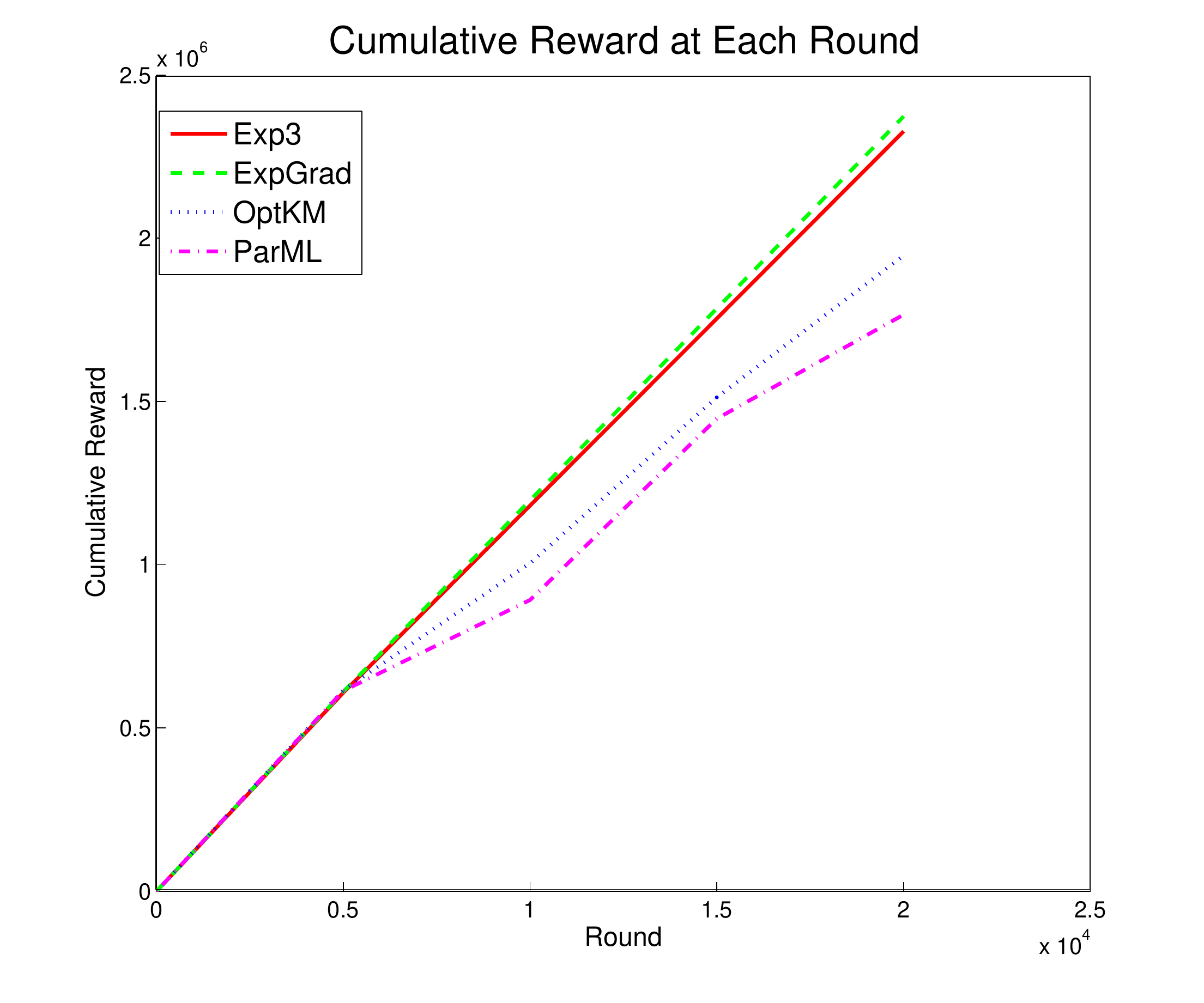}
    &
      \includegraphics[width=.25\textwidth]{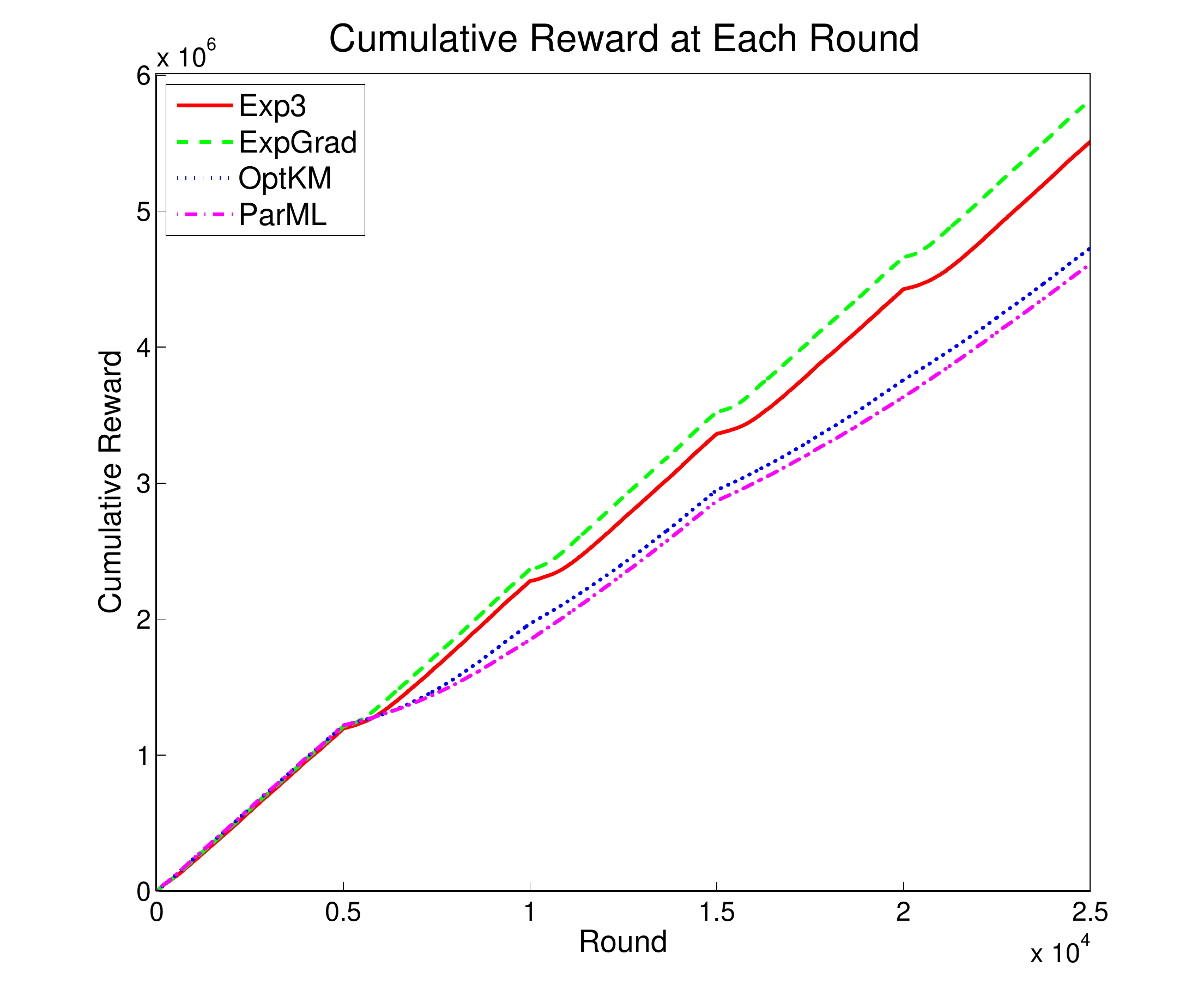}\\
      (a)&(b)
  \end{tabular}
  \caption{Cumulative rewards for each algorithm when distributions
    switch between 5 venues, for $V=200$(left) and $V=400$. Note the
    superior performance of \expgrad~and\expthree.} 
  \label{fig:5venuecum}
\end{figure}

In summary, it seems that our algorithms are competitive with those
of \citep{gknv2009darkpools} when the data is drawn from
their parametric model. When their assumptions about iid data are
not satisfied, we significantly outperform those algorithms. We note that we
have only experimented with oblivious adversaries here. The gulf in
performance may be even wider for adaptive adversaries.

\bibliographystyle{apalike}
\begin{small}
\bibliography{darkpools}

\begin{thebibliography}{}

\bibitem[Abernethy et~al., 2009]{aabr-svormd-09}
Abernethy, J., Agarwal, A., Bartlett, P.~L., and Rakhlin, A. (2009).
\newblock A stochastic view of optimal regret through minimax duality.
\newblock In {\em Proceedings of the 22nd Annual Conference on Learning
  Theory}.

\bibitem[Abernethy and Rakhlin, 2009]{ar2009highprob}
Abernethy, J. and Rakhlin, A. (2009).
\newblock Beating the adaptive bandit with high probability.
\newblock In {\em Proceedings of COLT 2009}.

\bibitem[Auer et~al., 2003]{auer2003exp3}
Auer, P., Cesa-Bianchi, N., Freund, Y., and Schapire, R.~E. (2003).
\newblock The nonstochastic multiarmed bandit problem.
\newblock {\em SIAM J. Comput.}, 32(1):48--77.

\bibitem[Barron, 1993]{barron93approx}
Barron, A. (1993).
\newblock Universal approximation bounds for superpositions of a sigmoidal
  function.
\newblock {\em Information Theory, IEEE Transactions on}, 39(3):930--945.

\bibitem[Bartlett et~al., 2008]{bdhkrt2008highprob}
Bartlett, P.~L., Dani, V., Hayes, T.~P., Kakade, S.~M., Rakhlin, A., and
  Tewari, A. (2008).
\newblock High-probability regret bounds for bandit online linear optimization.
\newblock In {\em Proceedings of COLT 2008}.

\bibitem[Cesa-Bianchi et~al., 2001]{cbcg2001generalization}
Cesa-Bianchi, N., Conconi, A., and Gentile, C. (2001).
\newblock On the generalization ability of on-line learning algorithms.
\newblock {\em IEEE Transactions on Information Theory}, 50:2050--2057.

\bibitem[Cesa-Bianchi and Lugosi, 2006]{cbl2006plg}
Cesa-Bianchi, N. and Lugosi, G. (2006).
\newblock {\em Prediction, Learning and Games}.
\newblock Cambridge University Press.

\bibitem[Ganchev et~al., 2009]{gknv2009darkpools}
Ganchev, K., Kearns, M., Nevmyvaka, Y., and Vaughan, J.~W. (2009).
\newblock Censored exploration and the dark pool problem.
\newblock In {\em Proceedings of Uncertainity in Artificial Intelligence, UAI
  2009}.

\bibitem[Haussler et~al., 1998]{hkw98optimal}
Haussler, D., Kivinen, J., and Warmuth, M.~K. (1998).
\newblock Sequential prediction of individual sequences under general loss
  functions.
\newblock {\em IEEE Transactions on Information Theory}, 44(5):1906--1925.

\bibitem[Jones, 1992]{jones92greedy}
Jones, L.~K. (1992).
\newblock A simple lemma on greedy approximation in {H}ilbert space and
  convergence rates for projection pursuit regression and neural network
  training.
\newblock {\em The Annals of Statistics}, 20(1).

\bibitem[Littlestone and Warmuth, 1994]{lw94wm}
Littlestone, N. and Warmuth, M.~K. (1994).
\newblock The weighted majority algorithm.
\newblock {\em Inf. Comput.}, 108(2):212--261.

\bibitem[Yu, 1993]{yu93assouad}
Yu, B. (1993).
\newblock Assouad, {F}ano and {L}e {C}am.
\newblock {\em Festschrift in Honor of L. Le Cam on his 70th Birthday}.

\end{thebibliography}
\end{small}
\end{document}